\theoremstyle{definition}
\newtheorem{definition}{Definition}[section]
\newtheorem{theorem}{Theorem}[section]
\newtheorem{lemma}{Lemma}[section]
\newtheorem{fact}{Fact}[section]
\newcommand{\vect}[1]{\text{vec}(#1)}
\newcommand{\rank}[1]{\text{rank}(#1)}
\newcommand{\RR}{\mathbb{R}}
\newcommand{\SPD}{\mathbb{S}_+}
\newcommand{\PD}{\mathbb{S}_{++}}
\DeclareMathOperator*{\argmin}{arg\,min}
\newcommand{\diag}{\text{diag}}
\newcommand{\tr}{\text{tr}}
\definecolor{dkgreen}{rgb}{0,0.6,0}
\definecolor{gray}{rgb}{0.5,0.5,0.5}
\definecolor{mauve}{rgb}{0.58,0,0.82}
\title{Efficient Multitask Feature and Relationship Learning}
\author{ \textbf{Han Zhao} \and \text{Otilia Stretcu} \\
Machine Learning Department \\
Carnegie Mellon University\\
\texttt{\{han.zhao, ostretcu\}@cs.cmu.edu}
\And
\textbf{Alexander J. Smola}   \\
Amazon Web Services \\
\texttt{alex@smola.org} \\
\And
\textbf{Geoffrey J. Gordon} \\
Microsoft Research Montreal \\
Carnegie Mellon University \\
\texttt{geoff.gordon@microsoft.com}
}
\begin{document}

\maketitle

\begin{abstract}
We consider a multitask learning problem, in which several predictors are learned jointly. Prior research has shown that learning the relations between tasks, and between the input features, together with the predictor, can lead to better generalization and interpretability, which proved to be useful for applications in many domains. In this paper, we consider a formulation of multitask learning that learns the relationships both between tasks and between features, represented through a task covariance and a feature covariance matrix, respectively. First, we demonstrate that existing methods proposed for this problem present an issue that may lead to ill-posed optimization. We then propose an alternative formulation, as well as an efficient algorithm to optimize it. Using ideas from optimization and graph theory, we propose an efficient coordinate-wise minimization algorithm that has a closed form solution for each block subproblem. Our experiments show that the proposed optimization method is orders of magnitude faster than its competitors. We also provide a nonlinear extension that is able to achieve better generalization than existing methods.
\end{abstract}

\section{INTRODUCTION}
In machine learning the goal is often to train predictive models for one or more tasks of interest. Making accurate predictions relies heavily on the existence of labeled data for the desired tasks. However, in real-world problems data is often hard to acquire (e.g., medical domains) or expensive to label (e.g., image segmentation). For many tasks, this makes it impractical or impossible to collect large volumes of labeled data. Multitask learning is a sub-problem of the general transfer learning paradigm that aims to improve generalization performance in a learning task, by learning models for multiple related tasks simultaneously. It has received considerable interest in the past decades~\citep{caruana1997multitask,evgeniou2004regularized,argyriou2007spectral,
argyriou2008convex,kato2008multi,liu2009multi,jacob2009clustered,zhang2010convex,zhang2010learning,
chen2011integrating,chen2012learning,li2015multi,jawanpuria2015efficient,adel2017unsupervised,zhao2017multiple,zhao2018adversarial,zhao2019learning,zhao2019deep}. One of the underlying assumptions behind many multitask learning algorithms is that the tasks are related to each other. Hence, a key question is how to define the notion of task relatedness, and how to capture it in the learning formulation. A common assumption is that tasks can be described by weight vectors, and that they are sampled from a shared prior distribution over their space~\citep{liu2009multi,zhang2010convex,zhang2010multi}. Another strand of work assumes common feature representations to be shared among multiple tasks, and the goal is to learn the shared representation as well as task-specific parameters simultaneously~\citep{thrun1996explanation,caruana1997multitask,evgeniou2007multi,argyriou2008convex}. Moreover, when structure about multiple tasks is available, e.g., task-specific descriptors~\citep{bonilla2007kernel} or a task similarity graph~\citep{evgeniou2004regularized}, regularizers can often be incorporated into the learning formulation to penalize hypotheses that are not consistent with the given structure. Very recently, \citet{sener2018multi} tackle the problem of multitask learning from the perspective of multi-objective optimization. Specifically, this work aims to find a Pareto-optimal solution for the multi-objective function defined by multiple tasks, and proposes to use Frank-Wolfe algorithm to find gradient update for shared parameters. 

There have been several attempts to improve predictions by either learning the relationships between different tasks~\citep{zhang2010convex}, or by exploiting the relationships between different features~\citep{argyriou2008convex}. In this paper we consider a multiconvex framework for multitask learning that improves predictions over tabula rasa learning by assuming that all the task vectors are sampled from a common matrix-variate normal prior. The framework, known as MTFRL~\citep{zhang2010learning}, learns the relationships both between tasks and between features simultaneously via two covariance matrices, i.e., the feature covariance matrix and the task covariance matrix. In this context, learning multiple tasks corresponds to estimating a matrix of model parameters, and learning feature/task relationships corresponds to estimating the row/column covariance matrices of model parameters, respectively. This property is favorable for applications where we not only aim for better generalization, but also seek to have a clear understanding about the relationships among different tasks. 

The goal of MTFRL is to optimize over both the task vectors, as well as the two covariance matrices in the prior. When the loss function is convex, the regularized problem of MTFRL is multiconvex. Previous approaches~\citep{zhang2010learning, zhang2011supervision, long2017learning} for solving this problem hinge on the classic flip-flop algorithm~\citep{dutilleul1999mle} to estimate the two covariance matrices. However, as we point out in Section~\ref{sec:problem}, the flip-flop algorithm cannot be directly applied as the maximum likelihood estimation (MLE) formulation of the multitask learning problem under this setting is ill-posed. As a result, in practice, heuristics have to be invented and applied in the algorithm to ensure the positive-definiteness of both covariance matrices. However, it is not clear whether such a fixed algorithm still converges or not. 

In this paper we propose a well-defined variant of the MTFRL framework, and design a block coordinate-wise minimization algorithm to solve this problem. We term our new formulation FEaTure and Relation learning (FETR). By design, FETR is free of the nonpositive-definite problem in MTFRL. To solve FETR, we propose efficient and analytic solutions for each of the subproblems, which allows us to get rid of the expensive iterative procedure to optimize the covariance matrices. Specifically, we achieve this by reducing an underlying matrix optimization problem with positive definite constraints into a minimum weight perfect matching problem on a complete bipartite graph, where we are able to solve analytically using combinatorial techniques. To solve the weight learning subproblem, we propose three different strategies, including a closed form solution, a gradient descent method with linear convergence guarantees when the instances are not shared by multiple tasks, and a numerical solution based on Sylvester equation when instances are shared. 

We demonstrate the efficiency of the proposed optimization algorithm by comparing it with an off-the-shelf projected gradient descent algorithm and the classic flip-flop algorithm, on both synthetic and real-world data. Experiments show that the proposed optimization method is orders of magnitude faster than its competitors, and it often converges to better solutions. Lastly, we extend FETR to nonlinear setting by combining its regularization scheme with rich nonlinear transformations using neural networks. This combined approach is able to achieve significantly better generalizations than existing methods on real-world datasets.

To summarize, our contributions are three-fold:
\begin{itemize}[topsep=0pt,itemsep=0pt]
\itemsep0em 
	\item 	We point out an ill-posed MLE problem of the existing multitask learning formulations and propose a well-defined variant, termed as FETR.
	\item 	To optimize FETR, we design an efficient block coordinate-wise minimization algorithm and derive analytic solutions for each of the subproblems.
	\item 	We extend our FETR formulation to nonlinear settings and empirically demonstrate its better generalizations on real-world datasets.
\end{itemize}

\section{PRELIMINARY}
\label{sec:prelimiary}
We start by introducing notations used throughout the paper and briefly discussing the MTFRL framework~\citep{zhang2010learning}. 

\subsection{NOTATION AND SETUP}
We use lowercase letters, such as $y$, to represent scalars, and lowercase bold letters, such as $\mathbf{x}$, to denote vectors. Capital letters are reserved for matrices. We use $\SPD^m$ and $\PD^m$ to denote the $m$-dimensional symmetric positive semidefinite cone and the $m$-dimensional symmetric positive definite cone, respectively. We write $\tr(A)$ for the trace of a matrix $A$, and $\mathcal{N}(\mathbf{m}, \Sigma)$ for the multivariate normal distribution with mean $\mathbf{m}$ and covariance matrix $\Sigma$. Finally, $G = (A, B, E; w)$ is a weighted bipartite graph with vertex sets $A$, $B$, edge set $E$ and weight function $w: E\to \RR_+$. For a matrix $W\in\RR^{d\times m}$, we use $\vect{W}\in\RR^{dm}$ to denote its vectorization. We consider the following setup. Suppose we are given $m$ learning tasks $\{T_i\}_{i=1}^m$, where for each learning task $T_i$ we have access to a training set $\mathcal{D}_i$ with $n_i$ data instances $(\mathbf{x}_i^j, y_i^j), j\in[n_i]$. For the simplicity of discussion, here we focus on the regression setting where $\mathbf{x}_i^j\in\mathcal{X}_i\subseteq\RR^d$ and $y_i^j\in\RR$. Extension to classification setting is straightforward. Let $f_i(\mathbf{w}_i, \cdot):\mathcal{X}_i\to\RR$ be our model with parameter $\mathbf{w}_i$. In what follows, we will assume our model for each task $T_i$ to be a linear regression, i.e., $f_i(\mathbf{w}_i, \mathbf{x}) = \mathbf{w}_i^T\mathbf{x}$. 

\subsection{MATRIX-VARIATE NORMAL DISTRIBUTION}
A matrix-variate normal distribution~\citep{gupta1999matrix} $W\sim\mathcal{MN}_{d\times m}(M, A, B)$ with mean $M\in\RR^{d\times m}$, row covariance matrix $A\in\PD^{d}$ and column covariance matrix $B\in\PD^m$ can be understood as a multivariate normal distribution with $\vect{W}\sim\mathcal{N}(\vect{M}, A\otimes B)$.\footnote{Probability density:
$p(X) = \exp(-\frac{1}{2}\tr(A^{-1}(X-M)B^{-1}(X-M)^T))/(2\pi)^{md/2}|A|^{m/2}|B|^{d/2}$.}
One advantage of the matrix-variate normal distribution over its equivalent multivariate counterpart is that by imposing structure on the row and column covariance matrices, the former admits a much more compact representation than the latter ($O(m^2 + d^2)$ versus $O(m^2d^2)$). The MLE of the matrix-variate normal distribution has been well studied in the literature~\citep{dutilleul1999mle}. Specifically, given an i.i.d. sample $\{W_i\}_{i=1}^n$ from $\mathcal{MN}_{d\times m}(M, A, B)$, the MLE of $M$ is $\overline{W} = \sum_{i=1}^n W_i/n$. The MLE of $A$ and $B$ are solutions to the following system:
\begin{equation}
\begin{cases}
A = \frac{1}{nm}\sum_{i=1}^n(W_i - \overline{W}) B^{-1} (W_i - \overline{W})^T \\
B = \frac{1}{nd}\sum_{i=1}^n (W_i - \overline{W})^T A^{-1} (W_i - \overline{W})
\end{cases}
\label{equ:ff}
\end{equation}
The above system of equations does not have a closed form solution as the two covariance estimates depend on each other. Hence, their estimates must be computed in an iterative fashion until convergence, which is known as the ``flip-flop'' algorithm~\citep{dutilleul1999mle,glanz2013expectation}. Furthermore, \citet{dutilleul1999mle} showed that the flip-flop algorithm is guaranteed to converge to positive definite covariance matrices iff $n\geq \max(d/m, m/d) + 1$. More properties of the MLE of the matrix-variate normal distribution can be found in~\citep{ros2016existence}.

\subsection{MULTITASK FEATURE AND RELATIONSHIP LEARNING}
In linear regression, the likelihood function for task $i$ is given by: $y_i^j \mid \mathbf{x}_i^j, \mathbf{w}_i, \epsilon_i \sim \mathcal{N}(\mathbf{w}_i^T\mathbf{x}, \epsilon_i^2)
$. Let $W = (\mathbf{w}_1, \ldots, \mathbf{w}_m)\in\RR^{d\times m}$ be the model parameter for $m$ different tasks drawn from the matrix-variate normal distribution $\mathcal{MN}_{d\times m}(W~|~\mathbf{0}_{d\times m}, \Sigma_1^{-1}, \Sigma_2^{-1})$. By maximizing the joint distribution and optimize over both the model parameters, as well as the two covariance matrices in the prior, we reach the following optimization problem: 
\begin{align}
& \underset{{W, \Sigma_1, \Sigma_2}}{\text{minimize}} && \sum_{i=1}^m\sum_{j=1}^{n_i}(y_i^j - \mathbf{w}_i^T\mathbf{x}_i^j)^2  + \eta~\tr(\Sigma_1 W\Sigma_2 W^T) \nonumber\\
& && - \eta~(m\log|\Sigma_1| + d\log|\Sigma_2|) \nonumber\\
& \text{subject to} && \Sigma_1\succ 0, \Sigma_2 \succ 0
\label{equ:popt}
\end{align}
where $\Sigma_1\in \PD^d, \Sigma_2\in\PD^m$ are the row and column precision matrices of the matrix normal prior distribution, respectively, and $\eta$ is a constant that does not depend on the optimization variables. It is not hard to see that the optimization problem in (\ref{equ:popt}) is not convex due to the coupling between $W, \Sigma_1$ and $\Sigma_2$ in the trace term. On the other hand, since the $\log|\cdot|$ function is concave in the positive definite cone~\citep{boyd2004convex}, and the trace is linear in terms of its components, it follows that (\ref{equ:popt}) is multiconvex. \citet{zhang2010learning} propose to use the flip-flop algorithm to solve the matrix subproblem in \eqref{equ:popt}, and this approach has also been widely applied in following publications on multitask learning~\citep{zhang2011supervision,li2014bayesian,long2017learning}. 

\section{ILL-POSED OPTIMIZATION}
\label{sec:problem}
In this section we first point out an important issue in the literature on the application of the flip-flop algorithm to solve the matrix subproblem in \eqref{equ:popt}. We then proceed to propose a well-defined variant of \eqref{equ:popt} to fix the problem. Interestingly, the variant we propose admits a closed form solution for each block variable that can be computed efficiently without any iterative procedure, which we will describe and derive in more detail in Section~\ref{sec:opt}. 

As proved by \citet{dutilleul1999mle}, one sufficient and necessary condition for the flip-flop algorithm to converge to positive definite matrices is that the number of samples from the matrix-variate normal distribution should satisfy $n > \max(d/m, m/d)$. However, in the context of multitask learning, we are essentially dealing with an inference problem, where the goal is to estimate the value of $W$, which is assumed to be an unknown but unique model parameter from the prior. This means that in this case we have $n = 1$, hence the condition for the convergence of the algorithm is violated. Technically, for any $W\in\RR^{d\times m}$ where $d\neq m$, following the iterative update formula of the flip-flop algorithm in \eqref{equ:ff}, for any feasible initialization of $\Sigma_1^{(0)}\in\PD^d$ and $\Sigma_2^{(0)}\in\PD^m$, we will have $\Sigma_1^{(1)} = W (\Sigma_2^{(0)})^{-1} W^T / m, 
\Sigma_2^{(1)} = W^T (\Sigma_1^{(0)})^{-1} W / d$. Now since $d\neq m$, we know that 
$\rank{W}\leq \min\{d, m\} < \max\{d, m\}$. As a result, after one iteration, we will have
\begin{align*}
\rank{\Sigma_1^{(1)}} \leq \rank{W}  < \max\{d, m\}\\
\rank{\Sigma_2^{(1)}} \leq \rank{W^T} < \max\{d, m\}
\end{align*}
i.e., at least one of $\Sigma_1^{(1)}$ and $\Sigma_2^{(1)}$ is going to be rank deficient, and in the next iteration the inverse operation is not well-defined on at least one of them. As a fix, \citet{zhang2010learning} proposed to use an artificial fudge factor to ensure that both covariance matrices stay positive definite after each update:
\begin{align*}
\Sigma_1^{(t+1)} = W (\Sigma_2^{(t)})^{-1} W^T / m + \epsilon I_d\\
\Sigma_2^{(t+1)} = W^T (\Sigma_1^{(t)})^{-1} W / d  + \epsilon I_m
\end{align*}
where $\epsilon > 0$ is a fixed, small constant. However, since the fudge factor $\epsilon$ is a fixed constant which does not decrease to 0 in the limit, it introduces extra biases into the estimation, and thus it is not clear whether or not the fixed algorithm converges.

Perhaps what is more surprising is that \eqref{equ:popt} is not even well-defined as an optimization problem. As a counterexample, we can fix $W = \mathbf{0}_{d\times m}$ and let $\Sigma_{1, \sigma} = \sigma I_d$, $\Sigma_{2, \sigma} = \sigma I_m$ with $\sigma > 0$ so that both $\Sigma_{1, \sigma}$ and $\Sigma_{2, \sigma}$ are feasible. Now let $\sigma \to \infty$, and it is easy to verify that in this case the objective function goes to $-\infty$. Although we only provide one counterexample, there is no reason to believe that the one we find is the only case where \eqref{equ:popt} fails. In fact, \citet{ros2016existence} have recently shown that the MLE of $\Sigma_1\otimes \Sigma_2$ does not exist if $n \leq \max\{d/m, m/d\}$, and the only nontrivial sufficient condition known so far to guarantee the existence of the MLE is $n > md$. However, in the context of multitask learning, the unknown model parameter $W$ is unique and hence we have $n = 1\ll md$. 

Given the wide applications of the above multitask learning framework in the literature, as well as the flip-flop algorithm in this setting, we feel it important and urgent to solve the above ill-posed and nonpositive definite problem. To this end, for some positive constants $0 < l < u$, we propose a variant of \eqref{equ:popt} as follows:
\begin{align}
& \underset{\Sigma_1, \Sigma_2, W}{\text{minimize}} && \sum_{i=1}^m\sum_{j=1}^{n_i}(y_i^j - \mathbf{w}_i^T\mathbf{x}_i^j )^2 +  \eta~\tr(\Sigma_1W\Sigma_2 W^T) \nonumber \\
& && - \eta~(m\log|\Sigma_1| + d\log|\Sigma_2|) \nonumber\\
& \text{subject to} && l I_d \preceq\Sigma_1\preceq u I_d, l I_m \preceq \Sigma_2\preceq u I_m
\label{equ:nopt}
\end{align}
The bounded constraints make the feasible set compact. Since the objective function is continuous, by the extreme value theorem, we know that the matrix subproblem of \eqref{equ:nopt} becomes well-defined and can achieve finite lower and upper bounds within the feasible set. Alternatively, one can also understand this constraint as specifying a truncated matrix normal prior over the compact set. As we will see shortly, technically the bounded constraint also allows us to develop an optimization procedure for $W$ with linear convergence rate, which is an exponential acceleration over the unbounded case. 

\section{MULTICONVEX OPTIMIZATION}
\label{sec:opt}
In this section we propose a block coordinate-wise minimization algorithm to optimize the objective given in (\ref{equ:nopt}). In each iteration, we alternatively minimize over $W$ with $\Sigma_1$ and $\Sigma_2$ fixed, then minimize over $\Sigma_1$ with $W$ and $\Sigma_2$ fixed, and lastly minimize $\Sigma_2$ with $W$ and $\Sigma_1$ fixed. The whole procedure is repeated until a stationary point is found. Due to space limit, we defer all the proofs and derivations to appendix. To simplify the notation, we assume $n = n_i, \forall i\in [m]$. Let $Y = (\mathbf{y}_1, \ldots, \mathbf{y}_m)\in\RR^{n\times m}$ be the target matrix and $X \in\RR^{n\times d}$ be the feature matrix shared by all the tasks. Using this notation, the objective can be equivalently expressed in matrix form as:
\begin{align}
& \underset{\Sigma_1, \Sigma_2, W}{\text{minimize}} && ||Y - XW||_F^2 + \eta~||\Sigma_1^{1/2}W\Sigma_2^{1/2}||_F^2 \nonumber\\
& && - \eta~(m\log|\Sigma_1| + d\log|\Sigma_2|)\nonumber \\
& \text{subject to} && l I_d \preceq\Sigma_1\preceq u I_d, l I_m \preceq \Sigma_2\preceq u I_m
\label{equ:matrixopt}
\end{align}

\subsection{OPTIMIZATION OF $W$}
In order to minimize over $W$ when both $\Sigma_1$ and $\Sigma_2$ are fixed, we solve the following subproblem:
\begin{align}
\underset{W}{\text{minimize}}\quad  h(W) \triangleq  ||Y - XW||_F^2 + \eta~||\Sigma_1^{1/2}W\Sigma_2^{1/2}||_F^2
\label{equ:optw}
\end{align}
As shown in the last section, this is an unconstrained convex optimization problem. We present three different algorithms to find the optimal solution of this subproblem. The first one guarantees to find an exact solution in closed form in $O(m^3d^3)$ time. The second one does gradient descent with fixed step size to iteratively refine the solution, and we show that in our case a linear convergence rate can be guaranteed. The third one finds the optimal solution by solving the Sylvester equation~\citep{bartels1972solution} characterized by the first-order optimality condition, after a proper transformation.

\textbf{A closed form solution}. It is worth noting that it is not obvious how to obtain a closed form solution directly from the formulation in (\ref{equ:optw}). An application of the first order optimality condition to (\ref{equ:optw}) will lead to: $X^TXW + \eta~\Sigma_1 W\Sigma_2 = X^TY$. Hence except for the special case where $\Sigma_2 = cI_m$ with $c > 0$ a constant, the above equation does not admit an easy closed form solution in its matrix representation. The workaround is based on the fact that the $d\times m$ dimensional matrix space is isomorphic to the $dm$ dimensional vector space, with the $\text{vec}(\cdot)$ operator implementing the isomorphism from $\RR^{d\times m}$ to $\RR^{dm}$. Using this property, we have:
\begin{restatable}{proposition}{closed}
\label{claim:closed}
(\ref{equ:optw}) can be solved in closed form in $O(m^3d^3 + mnd^2)$ time; the optimal solution $W^*$ is: $\vect{W^*} = \left(I_m\otimes (X^TX) + \eta~\Sigma_2\otimes \Sigma_1\right)^{-1}\text{vec}(X^TY)$.
\end{restatable}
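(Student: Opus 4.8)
The plan is to exploit convexity. Since $h$ is a sum of two convex quadratics in $W$—the term $\norm{Y-XW}_F^2$ is convex, and because the constraints force $\Sigma_1\succ 0,\Sigma_2\succ 0$ the regularizer $\eta\norm{\Sigma_1^{1/2}W\Sigma_2^{1/2}}_F^2=\eta\,\tr(\Sigma_1 W\Sigma_2 W^T)$ is convex as well—the stationarity condition $\nabla h(W)=0$ is both necessary and sufficient for global optimality. So it suffices to solve $\nabla h(W)=0$ and argue that it has a unique solution equal to the claimed formula.

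First I would compute the gradient using standard matrix calculus and the cyclic/symmetry properties of the trace. This gives $\nabla_W\norm{Y-XW}_F^2=2(X^TXW-X^TY)$ and $\nabla_W\tr(\Sigma_1 W\Sigma_2 W^T)=2\Sigma_1 W\Sigma_2$ (the factor of two collapsing from $\Sigma_1 W\Sigma_2+\Sigma_1^T W\Sigma_2^T$ by symmetry of $\Sigma_1,\Sigma_2$). Setting the sum to zero recovers the generalized Sylvester equation $X^TXW+\eta\,\Sigma_1 W\Sigma_2=X^TY$ already recorded in the text, which has no obvious closed form at the matrix level because $W$ is sandwiched between $\Sigma_1$ and $\Sigma_2$.

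The key step is to pass to the vectorized representation through the isomorphism $\vect{\cdot}$ and the identity $\vect{AXB}=(B^T\otimes A)\vect{X}$. Applying it termwise—using $\vect{X^TXW}=\vect{(X^TX)WI_m}=(I_m\otimes X^TX)\vect{W}$ and $\vect{\Sigma_1 W\Sigma_2}=(\Sigma_2^T\otimes\Sigma_1)\vect{W}=(\Sigma_2\otimes\Sigma_1)\vect{W}$ by symmetry of $\Sigma_2$—linearizes the Sylvester equation into the $md\times md$ system $\bigl(I_m\otimes X^TX+\eta\,\Sigma_2\otimes\Sigma_1\bigr)\vect{W}=\vect{X^TY}$, which is exactly the claimed expression once the coefficient matrix is inverted.

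What remains, and the one place where the bounded constraint is genuinely used, is to verify that this coefficient matrix is invertible, so that $W^*$ is well defined (and the stationary point is unique). Here I would note that $I_m\otimes X^TX\succeq 0$ since $X^TX\succeq 0$, while $\Sigma_2\otimes\Sigma_1\succ 0$ because the Kronecker product of two positive definite matrices is positive definite and the constraints force $\Sigma_1\succeq lI_d\succ 0$ and $\Sigma_2\succeq lI_m\succ 0$; hence for $\eta>0$ the sum is positive definite and nonsingular, yielding existence and uniqueness. For the complexity, forming $X^TX$ costs $O(nd^2)$ and $X^TY$ costs $O(mnd)$, both within $O(mnd^2)$, assembling the two Kronecker products costs $O(m^2d^2)$, and solving the $md\times md$ linear system dominates at $O((md)^3)=O(m^3d^3)$, which matches the stated $O(m^3d^3+mnd^2)$ bound. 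The main obstacle is really the conceptual move of recognizing that vectorization is what turns the coupled matrix equation into a single invertible linear system; everything after that is routine gradient bookkeeping and a short positive-definiteness argument.
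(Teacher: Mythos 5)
Your proposal is correct and takes essentially the same route as the paper: both reduce the problem to an $md\times md$ linear system via the vectorization isomorphism and the identity $\vect{AXB}=(B^T\otimes A)\vect{X}$, the only cosmetic difference being that you differentiate in matrix form first and then vectorize the stationarity equation, whereas the paper vectorizes the objective and reads off the minimizer of the resulting quadratic. Your explicit check that $I_m\otimes X^TX + \eta\,\Sigma_2\otimes\Sigma_1$ is positive definite (using $\Sigma_1\succeq lI_d$, $\Sigma_2\succeq lI_m$) is a small completeness bonus that the paper leaves implicit.
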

The computational bottleneck in the above procedure is in solving an $md\times md$ system of equations, which scales as $O(m^3d^3)$ if no further sparsity structure is available. 

\textbf{Gradient descent}. The closed form solution shown above scales cubically in both $m$ and $d$, and requires us to explicitly form a matrix of size $md\times md$. This can be intractable even for moderate $m$ and $d$. In such cases, instead of computing an exact solution to (\ref{equ:optw}), we can use gradient descent with fixed step size to obtain an approximate solution. The objective function $h(W)$ in (\ref{equ:optw}) is differentiable and its gradient can be obtained in $O(m^2d + md^2)$ time as $\nabla_W h(W) = X^T(Y - XW) + \eta~\Sigma_1W\Sigma_2$. Note that we can compute in advance both $X^TY$ and $X^TX$ in $O(nd^2)$ time, and cache them so that we do not need to recompute them in each gradient update step. Let $\lambda_i(A)$ be the $i$th largest eigenvalue of a real symmetric matrix $A$. Adapted from~\citet{nesterov2013introductory}, we provide a linear convergence guarantee for the gradient method in the following proposition:
\begin{restatable}{proposition}{gdconverge}
\label{thm:wconverge}
Let $\lambda_{l} = \lambda_d(X^TX) + \eta l^2$ and $\lambda_u = \lambda_1(X^TX)  + \eta u^2$. Choose $0 < t \leq \frac{2}{\lambda_u + \lambda_l}$. For all $\varepsilon > 0$,  gradient descent with step size $t$ converges to the optima within $O(\log(1/\varepsilon))$ steps.
\end{restatable}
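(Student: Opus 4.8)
The plan is to recognize that $h$ is a strongly convex, smooth quadratic in $W$, and that the stated constants $\lambda_l,\lambda_u$ are exactly a lower and an upper bound on the spectrum of its Hessian; once this is established, the claim follows from the standard linear-convergence theorem for gradient descent on such functions. First I would vectorize the problem. Using the gradient $\nabla_W h(W) = X^T(Y - XW) + \eta\,\Sigma_1 W\Sigma_2$ recorded in the preceding discussion together with the identity $\vect{\Sigma_1 W \Sigma_2} = (\Sigma_2 \otimes \Sigma_1)\vect{W}$, the map $\vect{W}\mapsto \vect{\nabla_W h(W)}$ is affine, so $h$, viewed as a function of $\vect{W}\in\RR^{md}$, is a quadratic whose Hessian is the constant symmetric matrix
\[
H = I_m \otimes (X^TX) + \eta\,(\Sigma_2 \otimes \Sigma_1).
\]

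The crux is to sandwich the spectrum of $H$ between $\lambda_l$ and $\lambda_u$. Since $H$ is a sum of two symmetric matrices that need not commute, I cannot simply read off its eigenvalues as a union of the two spectra, and instead invoke Weyl's inequality: for symmetric $P,Q$ one has $\lambda_{\min}(P)+\lambda_{\min}(Q)\le\lambda_{\min}(P+Q)$ and $\lambda_{\max}(P+Q)\le\lambda_{\max}(P)+\lambda_{\max}(Q)$. The eigenvalues of $I_m\otimes(X^TX)$ are precisely those of $X^TX$, so its extremes are $\lambda_d(X^TX)$ and $\lambda_1(X^TX)$; the eigenvalues of the Kronecker product $\Sigma_2\otimes\Sigma_1$ are the products $\lambda_i(\Sigma_2)\lambda_j(\Sigma_1)$, and here the bounded constraint $lI\preceq\Sigma_1,\Sigma_2\preceq uI$ enters decisively, yielding $\eta l^2\le\lambda_{\min}(\eta\,\Sigma_2\otimes\Sigma_1)$ and $\lambda_{\max}(\eta\,\Sigma_2\otimes\Sigma_1)\le\eta u^2$. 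Combining through Weyl gives $\lambda_l I\preceq H\preceq\lambda_u I$, i.e. $h$ is $\lambda_l$-strongly convex and $\lambda_u$-smooth. This is the one step where I expect the real work to lie, and it is also where the compactness constraint of \eqref{equ:matrixopt} is essential: without the lower bound $l>0$ the strong-convexity modulus could collapse to $\lambda_d(X^TX)$, which vanishes whenever $X$ is rank-deficient, and the linear rate would be lost.

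With these bounds in hand I would close the argument by appealing to the textbook contraction estimate for gradient descent on a $\mu$-strongly-convex, $L$-smooth function~\citep{nesterov2013introductory}: for any step size $0 < t \le \frac{2}{L+\mu}$ the iterates satisfy $||W^{(k)}-W^*||_F^2 \le \rho^k\,||W^{(0)}-W^*||_F^2$ with a contraction factor $\rho=\rho(t,\mu,L)<1$, for instance $\rho = 1 - \frac{2t\mu L}{\mu+L}$. Substituting $\mu=\lambda_l$, $L=\lambda_u$, and the admissible range $t\le\frac{2}{\lambda_u+\lambda_l}$ produces geometric decay of the error. Solving $\rho^k\le\varepsilon^2$ for $k$ then yields $k \ge \log(1/\varepsilon^2)/\log(1/\rho) = O(\log(1/\varepsilon))$, where the hidden constant depends only on the condition number $\lambda_u/\lambda_l$, which completes the proof.
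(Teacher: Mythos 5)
Your proposal is correct and follows essentially the same route as the paper: bound the Hessian spectrum of the vectorized quadratic by $\lambda_l$ and $\lambda_u$ via Weyl's inequality and the Kronecker-product eigenvalue fact (this is exactly where the paper uses the constraint $lI \preceq \Sigma_1, \Sigma_2 \preceq uI$), then apply the standard strongly-convex/smooth contraction bound with factor $1 - \tfrac{2t\lambda_l\lambda_u}{\lambda_l+\lambda_u}$. The only difference is presentational: you cite the textbook contraction estimate from~\citet{nesterov2013introductory}, whereas the paper re-derives it from co-coercivity by working with $g(W) = h(W) - \tfrac{\lambda_l}{2}\|W\|_F^2$.
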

The computational complexity to achieve an $\varepsilon$ approximate solution using gradient descent is $O(nd^2 + \log(1/\varepsilon)(m^2d + md^2))$. Compared with the $O(m^3d^3 + mnd^2)$ complexity for the exact solution, the gradient descent algorithm scales much better provided the condition number $\kappa \triangleq \lambda_u / \lambda_l$ is not too large. As a side note, when the condition number is large, we can effectively reduce it to $\sqrt{\kappa}$ by using conjugate gradient method~\citep{shewchuk1994introduction}. 

\textbf{Sylvester equation}. In the field of control theory, a Sylvester equation~\citep{bhatia1997and} is a matrix equation of the form $AX + XB = C$, where the goal is to find a solution matrix $X$ given $A, B$ and $C$. For this problem, there are efficient numerical algorithms with highly optimized implementations that can obtain a solution within cubic time. For example, the Bartels-Stewart algorithm~\citep{bartels1972solution} solves the Sylvester equation by first transforming $A$ and $B$ into Schur forms by QR factorization, and then solves the resulting triangular system via back-substitution. Our third approach is based on the observation that we can equivalently transform the first-order optimality equation into a Sylvester equation by multiplying both sides of the equation by $\Sigma_1^{-1}$: $\Sigma_1^{-1}X^TXW + \eta~W\Sigma_2 = \Sigma_1^{-1}X^TY$. As a result, finding the optimal solution of the subproblem amounts to solving the above Sylvester equation. Specifically, the solution to the above equation can be obtained using the Bartels-Stewart algorithm in $O(m^3 + d^3 + nd^2)$. 

\textbf{Remark}. Both the gradient descent and the Bartels-Stewart algorithm find the optimal solution in cubic time. However, gradient descent is more widely applicable than the Bartels-Stewart algorithm: the Bartels-Stewart algorithm only applies to the case where all the tasks share the same instances, so that we can write down the matrix equation explicitly, while gradient descent can be applied in the case where each task has different number of inputs and those inputs are not shared among tasks. On the other hand, as we will see in the experiments, in practice the Bartels-Stewart algorithm is faster than gradient descent, and provides a more numerically stable solution.

\subsection{OPTIMIZATION OF $\Sigma_1$ AND $\Sigma_2$}
\begin{algorithm}[htb]
\caption{Minimize $\Sigma_1$}
\label{alg:minsigma1}
\centering
\begin{algorithmic}[1]
\REQUIRE   $W$, $\Sigma_2$ and $l, u$.
\STATE  $[V, \nu]\leftarrow$ SVD$(W\Sigma_2W^T)$.
\STATE  $\lambda\leftarrow \mathbb{T}_{[l, u]}(m/\nu)$.
\STATE  $\Sigma_1\leftarrow V\diag(\lambda)V^T$.
\end{algorithmic}
\end{algorithm}
\begin{algorithm}[htb]
\caption{Minimize $\Sigma_2$}
\label{alg:minsigma2}
\centering
\begin{algorithmic}[1]
\REQUIRE   $W$, $\Sigma_1$ and $l, u$.
\STATE  $[V, \nu]\leftarrow$ SVD$(W^T\Sigma_1W)$.
\STATE  $\lambda\leftarrow \mathbb{T}_{[l, u]}(d/\nu)$.
\STATE  $\Sigma_2\leftarrow V\diag(\lambda)V^T$.
\end{algorithmic}
\end{algorithm}

Before we delve into the detailed analysis below, we first list the final algorithms used to optimize $\Sigma_1$ and $\Sigma_2$ in Algorithm~\ref{alg:minsigma1} and Algorithm~\ref{alg:minsigma2}, respectively. The hard-thresholding function used in Line 2 of Algorithm~\ref{alg:minsigma1} and Algorithm~\ref{alg:minsigma2} is defined as follows:
\begin{equation}
\mathbb{T}_{[l, u]}(x) = \max\{l, \min\{u, x\}\}
\label{equ:hardthresholding}
\end{equation}
The hard-thresholding function essentially keeps the value of its argument $x$ if $l\leq x\leq u$, otherwise it truncates the value of $x$ to $l (u)$ if $x < l (x > u)$ respectively. Both algorithms are remarkably simple: each algorithm only involves one SVD, one truncation and two matrix multiplications. The computational complexities of Algorithm~\ref{alg:minsigma1} and Algorithm~\ref{alg:minsigma2} are bounded by $O(m^2d + md^2 + d^3)$ and $O(m^2d + md^2 + m^3)$, respectively. 

In what follows we focus on analyzing the optimization w.r.t.\ $\Sigma_1$. A symmetric analysis can be applied to solve $\Sigma_2$ as well. In order to minimize over $\Sigma_1$ when $W$ and $\Sigma_2$ are fixed, we solve the following subproblem:
\begin{equation}
\underset{l I_d \preceq \Sigma_1 \preceq u I_d}{\text{minimize}} \quad \tr(\Sigma_1 W\Sigma_2 W^T) - m\log|\Sigma_1| 
\label{equ:optsigma}
\end{equation}
Although (\ref{equ:optsigma}) is a convex optimization problem, it is computationally expensive to solve using off-the-shelf algorithms, e.g., the interior point method, because of the constraints, as well as the non-linearity of the objective function. However, as we will show shortly, we can find a closed form optimal solution to this problem, using tools from the theory of doubly stochastic matrices~\citep{dufosse2016notes} and perfect bipartite graph matching. Due to space limit, we defer the detailed derivation and proof to appendix, and only show a sketch below. 

Without loss of generality, for any feasible $\Sigma_1$, using spectral decomposition, we can reparametrize $\Sigma_1$ as
\begin{equation}
\Sigma_1 = U\Lambda U^T,\quad \Lambda = \diag(\lambda_1, \ldots, \lambda_d)
\label{equ:t1}
\end{equation}
where $u \geq \lambda_1\geq\lambda_2\cdots\geq \lambda_d\geq l$. Similarly, we can represent 
\begin{equation}
W\Sigma_2W^T = VNV^T,\quad N = \diag(\nu_1, \ldots, \nu_d)
\label{equ:t2}
\end{equation}
where $0\leq \nu_1\leq\cdots\leq \nu_d$. Let $\lambda = (\lambda_1, \cdots, \lambda_d)^T$ and $\nu = (\nu_1, \cdots, \nu_d)^T$. Set $K = U^TV$ and define $P$ to be the Hadamard product of $K$, i.e., $P = K\circ K$. Since both $U$ and $V^T$ are orthonormal matrices, it immediately follows that $K$ is also an orthonormal matrix. As a result, we have the following two equations hold: 
\begin{align*}
\sum_{j=1}^d P_{ij} = \sum_{j=1}^d K_{ij}^2 = 1, \quad\forall i \in [d]\\
\sum_{i=1}^d P_{ij} = \sum_{i=1}^d K_{ij}^2 = 1, \quad\forall j \in [d]
\end{align*}
which implies that $P$ is a doubly stochastic matrix. Given $U$ being an orthonormal matrix, we have $\log|\Sigma_1| = \log|U\Lambda U^T| = \log|\Lambda|$. On the other hand, it can be readily verified that the following equality holds:
\begin{equation}
\tr(\Lambda K N K^T) = \sum_{i=1}^d\sum_{j=1}^d\lambda_i K_{ij}^2 \nu_j = \lambda^T P \nu
\label{equ:t3}
\end{equation}
By combining all the transformations in \eqref{equ:t1}, \eqref{equ:t2} and \eqref{equ:t3} and plug them in \eqref{equ:optsigma}, we have the following equivalent optimization problem:
\begin{align}
& \text{minimize}_{P, \lambda} && \lambda^T P\nu -m\sum_{i=1}^d \log \lambda_i \nonumber\\
& \text{subject to} && l\mathbf{1}_d\leq \lambda \leq u\mathbf{1}_d
\label{equ:short}
\end{align}
where $\mathbf{1}_d$ denotes a vector of all ones with dimension $d$. To solve (\ref{equ:short}), we make the following key observations: 
\begin{enumerate}[noitemsep,topsep=0pt]
	\item 	The minimization is decomposable in terms of $P$ and $\lambda$. Furthermore, the optimization over $P$ is a linear program (LP).
	\item	For any bounded LP, there exists at least one extreme point that achieves the optimal solution.
	\item 	 The set of $d\times d$ doubly stochastic matrices, denoted as $B_d$, forms a convex polytope, known as the Birkhoff polytope.
	\item 	By the Birkhoff-von Neumann theorem, $B_d$ is the convex hull of the set of permutation matrices, i.e., every extreme point of $B_d$ is a permutation matrix.
\end{enumerate}
Combining all the analysis above, it is clear to see that the optimal solution $P$ must be a permutation matrix. This motivates us to reduce (\ref{equ:short}) to a minimum-weight perfect matching problem on a weighted complete bipartite graph as follows: for any $\lambda, \nu\in\RR_+^d$, we can construct a weighted $d\times d$ bipartite graph $G = (V_\lambda, V_\nu, E;w)$ as follows:
\begin{itemize}[noitemsep,topsep=0pt]
  \item   For each $\lambda_i$, construct a vertex $v_{\lambda_i}\in V_\lambda$, $\forall i$.
  \item   For each $\nu_j$, construct a vertex $v_{\nu_j}\in V_\nu$, $\forall j$.
  \item   For each pair $(v_{\lambda_i}, v_{\nu_j})$, construct an edge $e(v_{\lambda_i}, v_{\nu_j})$ with weight $w(e(v_{\lambda_i}, v_{\nu_j})) = \lambda_i \nu_j$.
\end{itemize}
The following theorem relates the solution of the minimum weight matching to the partial solution of (\ref{equ:short}) w.r.t. $P$:
\begin{theorem}
Let $\lambda = (\lambda_1, \ldots, \lambda_d)$ and $\nu = (\nu_1, \ldots, \nu_d)$ with $\lambda_1\geq\cdots\geq\lambda_d$ and $\nu_1\leq\cdots\leq\nu_d$. The minimum-weight perfect matching on $G$ is the set of edges $\pi^* = \{(v_{\lambda_i}, v_{\nu_i}): 1\leq i\leq d\}$ with the minimum weight $w(\pi^*) = \sum_{i=1}^d \lambda_i\nu_{i}$. Furthermore, it equals $\min_P \lambda^T P \nu$.
\label{thm:combined}
\end{theorem}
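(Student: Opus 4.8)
The plan is to prove the two assertions separately: first that the identity pairing $\pi^*$ attains the minimum weight over all perfect matchings of $G$, and second that this combinatorial minimum coincides with the continuous minimum $\min_P \lambda^T P \nu$ over doubly stochastic matrices. The first claim is essentially the rearrangement inequality, which I would establish by an exchange argument; the second follows directly from the Birkhoff--von Neumann structure already recorded in the four observations preceding the statement.

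For the first part, I would begin by noting that every perfect matching on the complete bipartite graph $G$ corresponds bijectively to a permutation $\sigma$ of $[d]$, under which $v_{\lambda_i}$ is matched to $v_{\nu_{\sigma(i)}}$ and the matching has weight $\sum_{i=1}^d \lambda_i \nu_{\sigma(i)}$. The goal is to show the identity permutation minimizes this sum. Suppose some optimal $\sigma$ is not the identity; then it contains an inversion, i.e.\ indices $i < j$ with $\sigma(i) > \sigma(j)$. Swapping the images of $i$ and $j$ changes the weight by exactly $(\lambda_i - \lambda_j)(\nu_{\sigma(j)} - \nu_{\sigma(i)})$. Since $\lambda$ is sorted in decreasing order, $\lambda_i - \lambda_j \geq 0$, and since $\nu$ is sorted in increasing order while $\sigma(i) > \sigma(j)$, we have $\nu_{\sigma(j)} - \nu_{\sigma(i)} \leq 0$; hence the change is nonpositive and the swap does not increase the weight. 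Repeatedly removing inversions (as in bubble sort) transforms $\sigma$ into the identity without ever increasing the weight, so $\pi^*$ attains the minimum weight $\sum_{i=1}^d \lambda_i \nu_i$.

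For the second part, I would invoke the four observations directly. The objective $\lambda^T P \nu$ is linear in $P$, and the feasible region $B_d$ of doubly stochastic matrices is a compact convex polytope, so the minimum is attained at an extreme point. By the Birkhoff--von Neumann theorem the extreme points of $B_d$ are exactly the permutation matrices, whence $\min_{P \in B_d} \lambda^T P \nu = \min_\sigma \sum_{i=1}^d \lambda_i \nu_{\sigma(i)}$. Because a permutation matrix $P_\sigma$ satisfies $\lambda^T P_\sigma \nu = \sum_{i=1}^d \lambda_i \nu_{\sigma(i)}$, this latter minimum is precisely the minimum-weight perfect matching computed in the first part, giving $\min_P \lambda^T P \nu = \sum_{i=1}^d \lambda_i \nu_i$.

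I do not anticipate a serious obstacle, as the result is a clean combination of the rearrangement inequality with extreme-point reasoning for linear programs. The only point requiring genuine care is the bookkeeping of the sorting conventions: one must verify that the product $(\lambda_i - \lambda_j)(\nu_{\sigma(j)} - \nu_{\sigma(i)})$ carries the correct sign given that $\lambda$ is arranged in decreasing order while $\nu$ is arranged in increasing order, since an error there would flip the conclusion from minimization to maximization.
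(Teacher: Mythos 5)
Your proposal is correct and takes essentially the same approach as the paper: your exchange step on an inversion, with weight change $(\lambda_i-\lambda_j)(\nu_{\sigma(j)}-\nu_{\sigma(i)})\le 0$, is exactly the paper's lemma on re-matching an ``inverse pair,'' and your extreme-point plus Birkhoff--von Neumann argument is precisely the paper's reduction of $\min_P \lambda^T P \nu$ to the minimum-weight perfect matching. The only cosmetic difference is that you terminate the exchange process via a bubble-sort/inversion-count argument, whereas the paper packages the same idea as an induction on $d$.
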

\emph{Proof sketch}. The full proof of Theorem~\ref{thm:combined} is deferred to the appendix, and here we only show a sketch of the high-level idea. Basically, given any matching in the graph, if there is an inverse pair (a cross) in the matching, then we can improve the matching by re-matching the inverse pair, as shown in Figure~\ref{fig:invpair}.
\begin{figure}[htb]
\centering
  \includegraphics[width=0.8\linewidth]{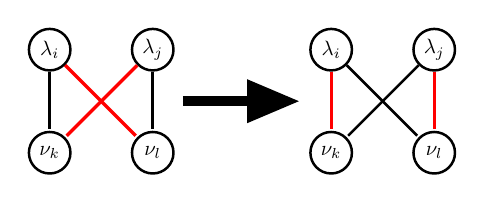}
\caption{Re-matching an inverse pair $(\lambda_i, \lambda_j, \nu_k, \nu_l) = \{(v_{\lambda_i}, v_{\nu_l}), (v_{\lambda_j}, v_{\nu_k})\}$ on the left side to a match with smaller weight $\{(v_{\lambda_i}, v_{\nu_k}), (v_{\lambda_j}, v_{\nu_l})\}$. Red color is used to highlight edges in the perfect matching. }
\label{fig:invpair}
\end{figure}
Now since there are only at most finitely many number of inverse pairs, an inductive argument shows that the optimal matching is achieved when there is no inverse pair, i.e., $v_{\lambda_i}$ is matched to $v_{\nu_i},\forall i\in[d]$ (Figure~\ref{fig:sketch}).
\begin{figure*}[htb]
\centering
	\includegraphics[width=0.9\linewidth]{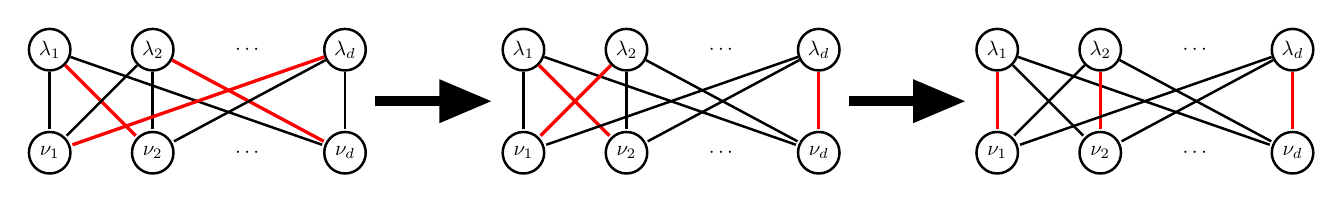}
\caption{The inductive proof works by recursively removing inverse pairs from $(\lambda_d, \nu_d)$ to $(\lambda_1, \nu_1)$. The process stops until there is no inverse pair in the matching. Red color is used to highlight edges in the perfect matching.}
\label{fig:sketch}
\end{figure*}

The optimal matching in Theorem~\ref{thm:combined} suggests that the optimal doubly stochastic matrix is given by $P^* = I_d$, which also implies $K^* = P^* = I_d$ and $U^* = V$. Now plug in the $P^* = I_d$ into (\ref{equ:short}). The optimization w.r.t.\ $\lambda$ decomposes into $d$ independent scalar optimization problems, which can be easily solved. Using the hard-thresholding function defined in \eqref{equ:hardthresholding}, we can express the optimal solution $\lambda_i^*$ as $\lambda_i^* = \mathbb{T}_{[l, u]}(m/\nu_i)$. Combine all the analysis given above, we get the algorithms listed at the beginning of this section to optimize $\Sigma_1$ and $\Sigma_2$. Interestingly, they have close connection to the proximal method proposed in the literature to solve matrix completion~\citep{cai2010singular}, or Euclidean projection under trace norm constraint~\citep{chen2011integrating,chen2012learning}. To the best of our knowledge, this is the first algorithm that solves linear function over matrices with negative log-determinant regularization (e.g.~\eqref{equ:optsigma}) efficiently.  

\section{NONLINEAR EXTENSION}
So far we discuss our FETR framework under the linear regression model, but it can be readily extended to any nonlinear regression/classification settings. One straightforward way to do so is to apply the (orthogonal) random Fourier transformation~\citep{rahimi2008random,felix2016orthogonal} to generate high-dimensional random features so that linear FETR in the transformed space corresponds to nonlinear models in the original feature space. However, depending on the dimension of the random features, this approach might lead to a huge covariance matrix $\Sigma_1$ that is expensive to optimize. 

Another more natural and expressive approach is to combine our regularization scheme and optimization method with parametrized nonlinear feature transformations, such as neural networks. More specifically, let $g(\mathbf{x};\theta):\RR^d\to\RR^p$ be a neural network with learnable parameter $\theta$ that defines a nonlinear transformation of the input features from $\RR^d$ to $\RR^p$. Essentially we can replace the feature matrix $X$ in (\ref{equ:matrixopt}) with $g(\mathbf{x};\theta)$ to create a regularized multitask neural network~\citep{caruana1997multitask} where we add one more layer defined by the matrix $W$ on top of the nonlinear mapping given by $g(\mathbf{x};\theta)$. To train the model, we can use backpropagation to optimize $W$, $\theta$ and our proposed approach to optimize the two covariance matrices. We will further explore this nonlinear extension in Section~\ref{sect:experiments} to demonstrate its power in statistical modeling.
\begin{figure*}[htb]
\begin{subfigure}[t]{0.5\linewidth}
\centering
  \includegraphics[width=\linewidth]{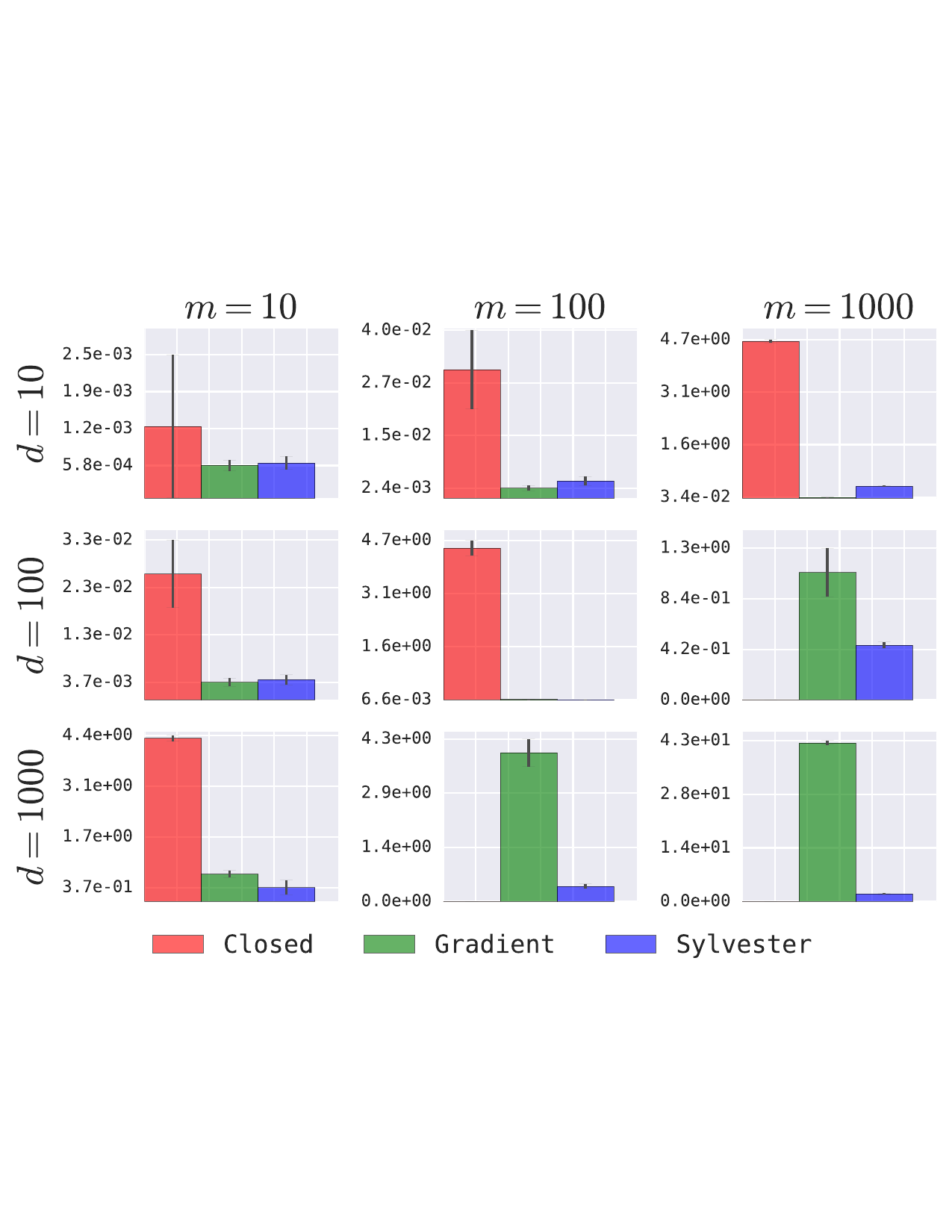}
\caption{The mean run time (seconds) under each experimental configuration. The closed form solution does not scale when $md \geq 10^4$.}
\label{fig:runtime}
\end{subfigure}
~
\begin{subfigure}[t]{0.46\linewidth}
\centering
    \includegraphics[width=\linewidth]{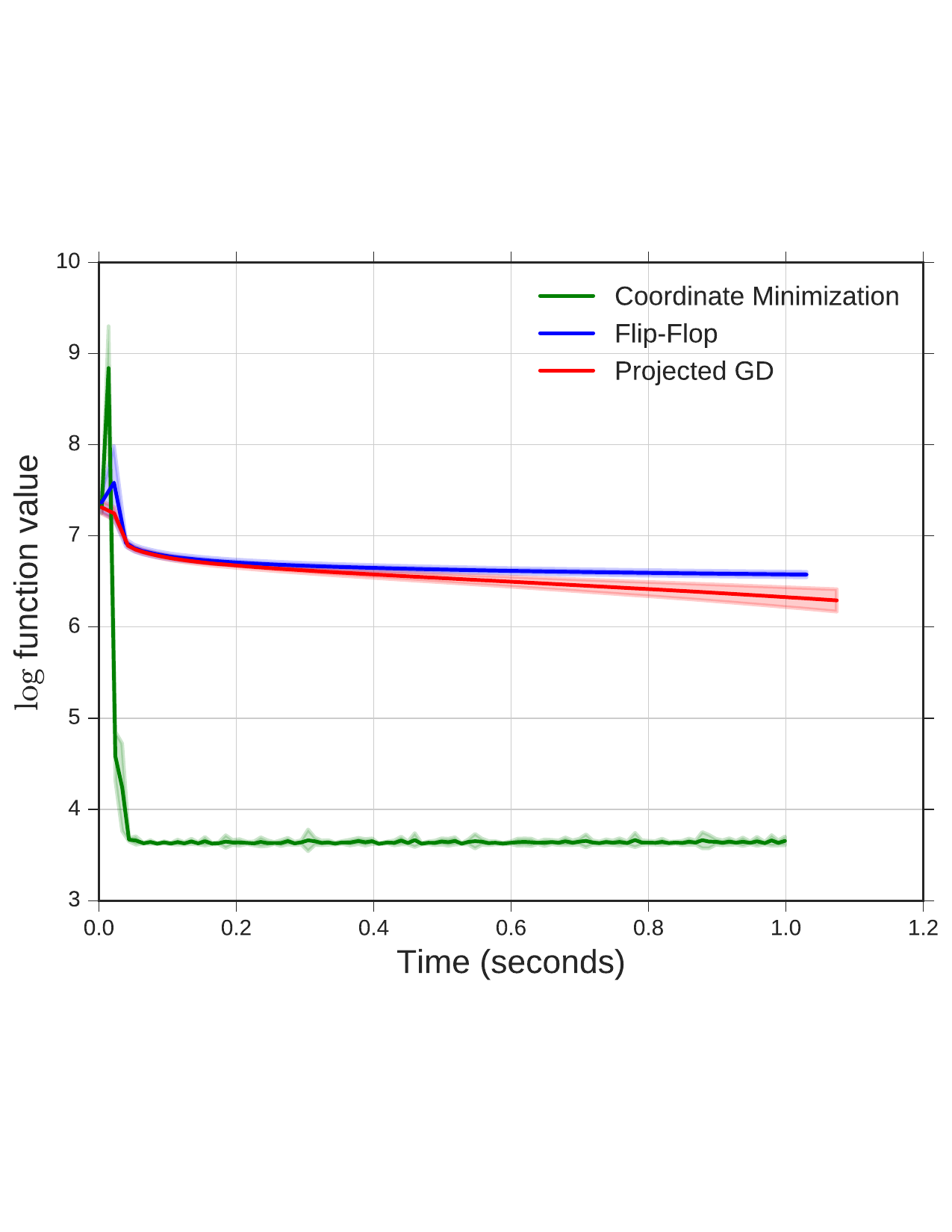}
\caption{The convergence speed of coordinate minimization versus projected gradient descent and the flip-flop algorithm on the SARCOS dataset. All the experiments are repeated 10 times.}
\label{fig:pgd}
\end{subfigure}
\caption{Experimental results of the convergence analysis on synthetic data.}
\end{figure*}

\section{EXPERIMENTS}
\label{sect:experiments}
\subsection{CONVERGENCE ANALYSIS AND COMPUTATIONAL EFFICIENCY}
We first investigate the efficiency and scalability of the three different algorithms for minimizing w.r.t.\ $W$ on synthetic data sets. For each experiment, we generate a synthetic data set which consists of $n = 10^4$ instances that are shared among all the tasks. All the instances are randomly sampled uniformly from $[0, 1]^d$. We gradually increase the dimension of features, $d$, and the number of tasks, $m$ to test scalability. 

The first algorithm implements the closed form solution by explicitly computing the $md\times md$ matrix product and then solving the linear system. The second one is the proposed gradient descent, and the last one uses the Bartels-Stewart algorithm to solve the equivalent Sylvester equation to compute $W$. We use open source toolkit \texttt{scipy} whose backend implementation uses highly optimized Fortran code. For all the synthetic experiments we set $l = 0.01$ and $u = 100$, which corresponds to a condition number of $10^4$. We fix the coefficients $\eta = 1.0$. We repeat each experiment for 10 times to show both the mean and the variance. 

The experimental results are shown in Figure~\ref{fig:runtime}. As expected, the closed form solution does not scale to problems of even moderate size due to its large memory requirement. In practice the Bartels-Stewart algorithm is about one order of magnitude faster than the gradient descent method when either $m$ or $d$ is large. It is also worth pointing out here that the Bartels-Stewart algorithm is the most numerically stable algorithm among the three based on our observations. 

\begin{table*}[htb]
\caption{Mean squared error on the SARCOS data and the mean of normalized mean squared error (NMSE) on the school dataset across 10-fold cross-validation.}
\label{table:sarcos}
\begin{center}
\begin{small}
\begin{sc}
\begin{tabular}{l*7r|r}
\toprule
\multicolumn{1}{c}{\multirow{3}{*}{Method}} & \multicolumn{8}{c}{Datasets} \\
\cmidrule{2-9} 
 & \multicolumn{7}{c|}{Sarcos}     & \multicolumn{1}{c}{\multirow{2}{*}{School}} \\
 & \multicolumn{1}{c}{1st}  & \multicolumn{1}{c}{2nd}  & \multicolumn{1}{c}{3rd}  & \multicolumn{1}{c}{4th}  & \multicolumn{1}{c}{5th}  & \multicolumn{1}{c}{6th}  & \multicolumn{1}{c|}{7th}  &  \\
\midrule
\texttt{STL} & 31.40 & 22.90  & 9.13   & 10.30 &  0.14 & 0.84 & 0.46 & 0.9882 $\pm$ 0.0196\\
\midrule
\texttt{MTFL} & 31.41 & 22.91 & 9.13 & 10.33  & 0.14 & \textbf{0.83} & 0.45 & 0.8891 $\pm$ 0.0380\\
\texttt{MTRL} & 31.09 & 22.69 & \textbf{9.08} & 9.74  & 0.14 & \textbf{0.83} & 0.44 & 0.9007 $\pm$ 0.0407 \\
\texttt{MTFRL} &  31.13 & \textbf{22.60} & 9.10 & 9.74 & \textbf{0.13} & \textbf{0.83} & 0.45 & 0.8451 $\pm$ 0.0197\\
\midrule
\texttt{FETR} & \textbf{31.08} & 22.68 & \textbf{9.08} & \textbf{9.73} & \textbf{0.13} & \textbf{0.83} & \textbf{0.43} & \textbf{0.8134 $\pm$ 0.0253}\\
\midrule
\texttt{STL-NN} & 24.81 & 17.20 & 8.97 & 8.36 & 0.13 & 0.72 & 0.34 & \multicolumn{1}{c}{$-$}\\
\texttt{MT-NN} & 12.01	& 10.54 & 5.02 & 7.15 & 0.09 & 0.70 & 0.27 & \multicolumn{1}{c}{$-$}\\
\texttt{MTFRL-NN} & 11.02 & 9.51 & 4.99 & 7.11 & \textbf{0.08} & 0.62 & 0.27 & \multicolumn{1}{c}{$-$} \\
\texttt{FETR-NN} & \textbf{10.77} & \textbf{9.34} & \textbf{4.95} & \textbf{7.01} & \textbf{0.08} & \textbf{0.59} & \textbf{0.24} & \multicolumn{1}{c}{$-$} \\
\bottomrule
\end{tabular}
\end{sc}
\end{small}
\end{center}
\vspace{-1em}
\end{table*}

We compare our proposed coordinate minimization algorithm with an off-the-shelf projected gradient method and the flip-flop algorithm to solve the optimization problem (\ref{equ:matrixopt}). Specifically, the projected gradient method updates $W, \Sigma_1$ and $\Sigma_2$ in each iteration and then projects $\Sigma_1$ and $\Sigma_2$ onto the corresponding feasible regions. The flip-flop algorithm is implemented as suggested in \citet{zhang2010learning} and we use a fudge factor of $10^{-3}$ to avoid the nonpositive definite problem. In each iteration, both covariance matrices are projected onto the feasible region as well. In the SARCOS dataset all the instances are shared among all the tasks, so that the Sylvester solver is used to optimize $W$ in coordinate minimization. We repeat the experiments 10 times and report the mean and standard deviation of the $\log$ function values versus the time used by all three algorithms (Figure~\ref{fig:pgd}). It is clear from Figure~\ref{fig:pgd} that our proposed algorithm not only converges much faster than the other two competitors, but also achieves better results. In fact, as we observe in our experiments, the proposed algorithm usually converges in less than 10 iterations. 

\subsection{REAL-WORLD DATASETS}
\vspace{-0.6em}
In this section we apply FETR to two real-world datasets to demonstrate its statistical efficiency. 

\textbf{Robot Inverse Dynamics} This data relates to an inverse dynamics problem for a seven degree-of-freedom (DOF) SARCOS anthropomorphic robot arm~\citep{vijayakumar2000locally}. The goal is to map from a 21-dimensional input space (7 joint positions, 7 joint velocities, 7 joint accelerations) to the corresponding 7 joint torques. Hence there are 7 tasks and the inputs are shared among all the tasks. The training set and test set contain 44,484 and 4,449 examples, respectively. We further partition the training set into a training set and a validation set, containing 31,138 and 13,346 instances, respectively.

\textbf{School Data} This dataset consists of the examination scores of 15,362 students from 139 secondary schools~\citep{goldstein1991multilevel}. It has 27 input features, and contains 139 tasks. Since the train/test splits are not provided, we use a 10-fold cross-validation procedure to generate the training and test datasets.

\subsection{SETUP AND RESULTS}
\vspace{-0.6em}
We compare \texttt{FETR} with multitask feature learning~\citep{evgeniou2007multi} (\texttt{MTFL}), multitask relationship learning~\citep{zhang2010convex} (\texttt{MTRL}), and the \texttt{MTFRL} framework. 
We also use ridge regression as our baseline model, denoted as single task learning (\texttt{STL}).
The results reported for \texttt{FETR} on the SARCOS dataset are obtained using the Sylvester equation solver, while for the School dataset the inputs are not shared among different tasks and hence we use our gradient descent solver for $W$ instead.
To evaluate, we compute the mean of normalized mean squared error (NMSE) over the output tasks (e.g., 139 tasks for the School data).
The NMSE is defined as the ratio of the MSE and the variance on a task.
For the School dataset, we show the mean NMSE and its standard deviation across 10 cross-validation folds, since no train/test splits are provided.

\begin{figure}[t!]
\centering
\vspace{-1em}
\begin{subfigure}[b]{\linewidth}
\centering
  \includegraphics[width=0.8\linewidth]{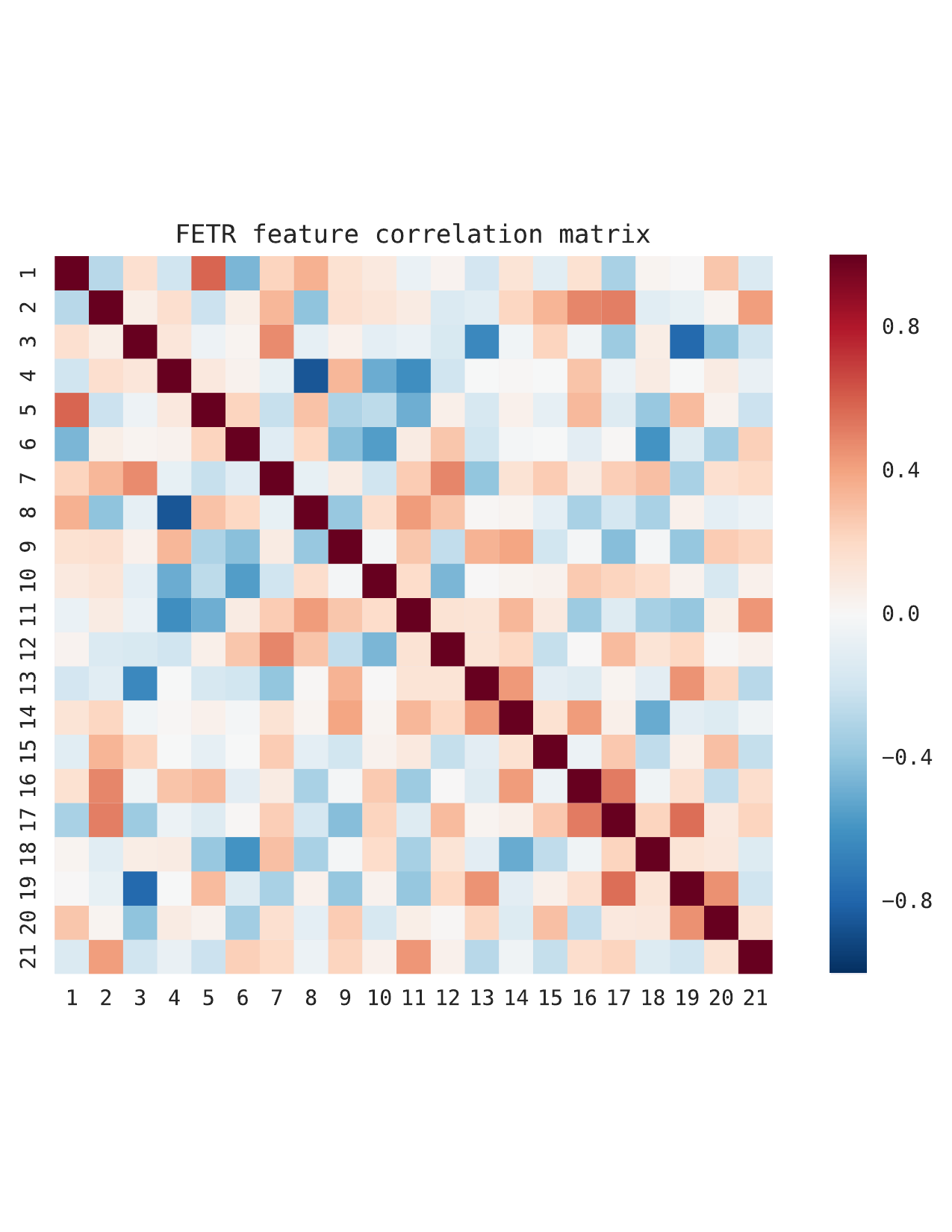}
\caption{Feature covariance matrix.}
\end{subfigure}
~
\begin{subfigure}[b]{\linewidth}
\centering
    \includegraphics[width=0.8\linewidth]{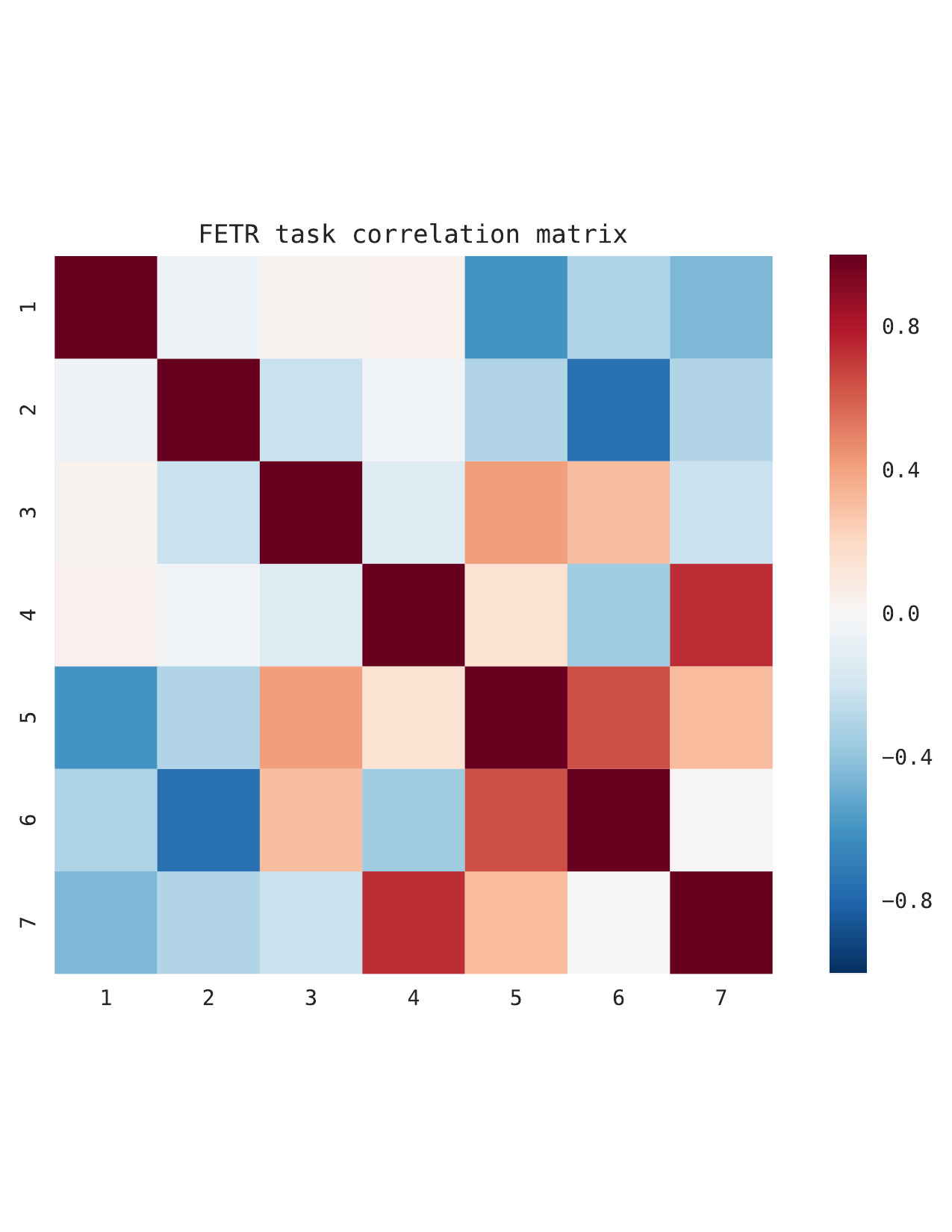}
\caption{Task covariance matrix.}
\end{subfigure}
\caption{Estimated feature and task covariance matrices on the SARCOS dataset.}
\label{fig:covariance}
\vspace*{-1.5em}
\end{figure}

To show the power of the nonlinear extension, we also run experiments using a single task neural network (\texttt{STL-NN}) and the multitask neural network (\texttt{MT-NN}), based on which we propose our \texttt{FETR-NN}, which incorporates the regularization scheme into the last layer of the \texttt{MT-NN}. \texttt{STL-NN} is a model where we use a separate network for each task, while in \texttt{MT-NN} all layers except the last output layer are shared among different tasks. As another baseline, we also compare our method with the multilinear relationship network~\citep{long2017learning}, which can be understood as an extension of the MTFRL method using neural networks (\texttt{MTFRL-NN}). 
In all experiments, \texttt{STL-NN}, \texttt{MT-NN}, \texttt{MTFRL-NN} and \texttt{FETR-NN} share exactly the same network structure: an input layer with 21 dimensions, followed by two hidden layers with 256 and 100 hidden units. The output of the network in \texttt{MT-NN}, \texttt{MTFRL-NN} and \texttt{FETR-NN} is a multitask layer that contains 7 output units, while in \texttt{STL-NN}, the output only contains a single unit. All the methods share the same experimental setting, including model selection. In all the experiments we fix $l = 10^{-3}$ and $u = 10^3$. The hyperparameters range from $\eta \in \{10^{-5}, \ldots, 10^3\}$, and we use the validation set for model selection. Note that because the instances are not shared between different tasks for the School dataset, \texttt{MT-NN}, \texttt{MTFRL-NN} and \texttt{FETR-NN} cannot be directly applied. For each method, the best model on the validation set is selected.

The results are summarized in Table~\ref{table:sarcos} (the smaller the better). Among all the methods, \texttt{FETR} consistently achieves lower test set NMSEs. Moreover, we observe a significant improvement of both \texttt{MT-NN}, \texttt{MTFRL-NN} and \texttt{FETR-NN} over all the linear baselines and \texttt{STL-NN}. \texttt{FETR-NN} further improves over \texttt{MT-NN} and \texttt{MTFRL-NN} on all the tasks. The experimental results confirm that multitask learning usually improves over single task learning when the dataset is small and tasks are related. Furthermore, among all the competitors, we observe that nonlinear models combined with our \texttt{FETR} framework give the overall best results, demonstrating the effectiveness of the proposed approach in both linear and nonlinear settings. 

One by-product of FETR is that we also have access to the estimated row and column covariance matrices. In Figure~\ref{fig:covariance} we plot the feature and task covariance matrices respectively, where we can clearly observe a block diagonal structure: the first 4 tasks are negatively correlated with the rest 3, and the 5th and 6th task are positively correlated. Intuitively, these correlations are consistent with the SARCOS dataset where several joints move jointly. 

\section{CONCLUSIONS}
\vspace{-0.7em}
In this paper we point out a common flaw in the existing multitask feature and relationship learning frameworks, and propose a constrained variant to fix it. Our framework admits a multiconvex formulation, which allows us to design an efficient block coordinate-wise algorithm to optimize. To solve the weight learning subproblem, we propose three different strategies that can be used no matter whether the instances are shared by multiple tasks or not. To learn the covariance matrices, we reduce the underlying matrix optimization subproblem to a minimum weight perfect matching problem, and solve it exactly in closed form. To the best of our knowledge, all the previous methods have to resort to expensive iterative procedures to solve this problem. At the end, we also discuss several possible extensions of the proposed framework to nonlinear settings. Experimental results show that our method is orders of magnitude faster than its competitors, and it demonstrates significantly improved statistical performance on two real-world datasets. 

\subsubsection*{Acknowledgements}
\vspace{-0.5em}
HZ and GG would like to acknowledge the support from DARPA XAI project, contract FA87501720152.
OS is supported by a CMLH Fellowship in Digital Health and by NIH under grant U01NS098969.

\bibliography{reference}
\bibliographystyle{abbrvnat}

\newpage
\onecolumn
\appendix 
\section*{Supplementary Material of ``Efficient Multitask Feature and Relationship Learning''}
In this supplementary material we provide all the missing proofs to the propositions, lemmas and theorems in our main paper ``Efficient Multitask Feature and Relationship Learning''. 

\section{PROOFS}
\subsection{Proof of Proposition~\ref{claim:closed}}
\closed*
To prove this claim, we need the following facts about tensor product:
\begin{fact}
Let $A$ be a matrix. Then $||A||_F = ||\text{vec}(A)||_2$.
\label{fact:3.1}
\end{fact}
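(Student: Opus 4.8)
The plan is to prove the identity by directly unfolding the two definitions and observing that both quantities are computed from the same collection of matrix entries. Recall that for a matrix $A\in\RR^{d\times m}$ the squared Frobenius norm is by definition the sum of the squares of all its entries, $\norm{A}_F^2 = \sum_{i=1}^d\sum_{j=1}^m A_{ij}^2 = \tr(A^TA)$. On the other hand, $\vect{A}$ is the column vector in $\RR^{dm}$ obtained by stacking the columns of $A$ on top of one another, so its $k$-th coordinate equals $A_{ij}$ for the unique pair $(i,j)$ that the stacking convention assigns to index $k$; in other words, $\vect{\cdot}$ merely rearranges the entries of $A$ into a single vector without altering their values.

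First I would expand $\norm{\vect{A}}_2^2 = \sum_{k=1}^{dm}\bigl(\vect{A}\bigr)_k^2$ and then reindex the sum by the pair $(i,j)$, using the fact that the map $k\mapsto(i,j)$ induced by the stacking is a bijection between $[dm]$ and $[d]\times[m]$. This rewrites the sum as $\sum_{i=1}^d\sum_{j=1}^m A_{ij}^2$, which is exactly $\norm{A}_F^2$. Since the $\ell_2$ norm depends only on the multiset of coordinate values and is invariant under any permutation of the coordinates, the particular ordering chosen by $\vect{\cdot}$ is immaterial. Taking the nonnegative square root of the resulting equality $\norm{\vect{A}}_2^2 = \norm{A}_F^2$ then yields $\norm{A}_F = \norm{\vect{A}}_2$, as claimed.

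There is essentially no obstacle here, as the statement is a definitional identity: the only point needing a (trivial) justification is that vectorization is a value-preserving reindexing of the entries of $A$, so that the two sums of squares coincide regardless of the convention used to flatten the matrix.
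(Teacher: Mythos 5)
Your proof is correct: the paper states this as a background fact without proof, and your argument --- expanding $\norm{\vect{A}}_2^2$ as a sum of squared coordinates and reindexing via the bijection $[dm]\leftrightarrow[d]\times[m]$ induced by column stacking to recover $\norm{A}_F^2$, then taking square roots --- is exactly the standard definitional verification one would expect. Nothing is missing; the remark that the identity holds independently of the particular flattening convention is a nice touch, since the paper's later use of Fact~\ref{fact:3.2} does rely on the specific column-stacking convention, but this fact alone does not.
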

\begin{fact}
Let $A\in\RR^{m_1\times n_1}$, $B\in\RR^{n_1\times n_2}$ and $C\in \RR^{n_2\times m_2}$. Then $\text{vec}(ABC) = (C^T\otimes A)\text{vec}(B)$.
\label{fact:3.2}
\end{fact}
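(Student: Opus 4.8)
The plan is to verify the identity directly by comparing the two sides one column-block at a time, using the defining convention that $\text{vec}(\cdot)$ stacks the columns of a matrix on top of one another. Write $\mathbf{b}_1, \ldots, \mathbf{b}_{n_2}$ for the columns of $B$ and $\mathbf{c}_1, \ldots, \mathbf{c}_{m_2}$ for the columns of $C$, and let $\mathbf{e}_k$ denote the $k$-th standard basis vector of the appropriate dimension.

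First I would compute the $k$-th column of the triple product $ABC$. Since $ABC\mathbf{e}_k = AB\mathbf{c}_k$ and $\mathbf{c}_k = \sum_{l=1}^{n_2} C_{lk}\mathbf{e}_l$, linearity of matrix multiplication gives $AB\mathbf{c}_k = \sum_{l=1}^{n_2} C_{lk}\, A\mathbf{b}_l$. Consequently the $k$-th block (of height $m_1$) of $\text{vec}(ABC)$ is exactly $\sum_{l=1}^{n_2} C_{lk}\, A\mathbf{b}_l$. Next I would compute the corresponding block of the right-hand side: the Kronecker product $C^T \otimes A$ has $(k,l)$ block equal to $(C^T)_{kl}\, A = C_{lk}\, A$, while $\text{vec}(B)$ has $l$-th block $\mathbf{b}_l$, so multiplying block-row $k$ of $C^T \otimes A$ against $\text{vec}(B)$ yields $\sum_{l=1}^{n_2} C_{lk}\, A\mathbf{b}_l$. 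This matches the block computed above, and since the equality holds for every $k \in [m_2]$, the two stacked vectors coincide, establishing the identity.

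An alternative I would fall back on if the block bookkeeping becomes unwieldy is to exploit bilinearity: both sides are linear in $B$, so it suffices to check the basis matrices $B = \mathbf{e}_i\mathbf{e}_j^T$. Then $ABC = \mathbf{a}_i \mathbf{r}_j^T$ is rank one, where $\mathbf{a}_i$ is the $i$-th column of $A$ and $\mathbf{r}_j^T$ is the $j$-th row of $C$, so $\text{vec}(ABC) = \mathbf{r}_j \otimes \mathbf{a}_i$ by the rank-one vectorization rule. The right-hand side gives the same value via the mixed-product property, $(C^T \otimes A)(\mathbf{e}_j \otimes \mathbf{e}_i) = (C^T\mathbf{e}_j) \otimes (A\mathbf{e}_i) = \mathbf{r}_j \otimes \mathbf{a}_i$, since $C^T\mathbf{e}_j = \mathbf{r}_j$ and $A\mathbf{e}_i = \mathbf{a}_i$.

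There is no genuine obstacle here, as this is a standard identity; the only care required is maintaining consistency with the column-stacking convention for $\text{vec}(\cdot)$ and the block structure of the Kronecker product, so that the transpose in $C^T$ lands the scalar $C_{lk}$ in the correct block. I would therefore lead with the direct column-by-column argument and relegate the rank-one computation to a one-line remark.
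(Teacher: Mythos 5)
Your proof is correct. There is nothing in the paper to compare it against: Fact~\ref{fact:3.2} is stated without proof, treated as a standard Kronecker-product identity and used as a black box in the proof of Proposition~\ref{claim:closed}. Your block-wise verification is the standard argument and is carried out correctly under the column-stacking convention; your fallback via bilinearity is also sound, and it is worth noting that its key step $(C^T\otimes A)(\mathbf{e}_j\otimes\mathbf{e}_i)=(C^T\mathbf{e}_j)\otimes(A\mathbf{e}_i)$ is precisely a special case of the paper's Fact~\ref{fact:3.3}, so that route derives Fact~\ref{fact:3.2} from another identity the paper already assumes.
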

\begin{fact}
Let $S_1\in\RR^{m_1\times n_1}, S_2\in\RR^{n_1\times p_1}$ and $T_1\in\RR^{m_2\times n_2}, T_2\in\RR^{n_2\times p_2}$. Then $(S_1\otimes S_2)(T_1\otimes T_2) = (S_1S_2)\otimes (T_1T_2)$.
\label{fact:3.3}
\end{fact}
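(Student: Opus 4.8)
The plan is to verify the identity directly by comparing the two sides block by block, using the defining block structure of the Kronecker product. Recall that for matrices $A$ and $B$, the product $A \otimes B$ is the block matrix whose $(i,j)$ block equals $A_{ij} B$; that is, each block is a scalar multiple of $B$, the scalar being the corresponding entry of $A$. Both sides of the claimed identity are matrices of the same shape, $\RR^{m_1 m_2 \times p_1 p_2}$, since $S_1 S_2 \in \RR^{m_1 \times p_1}$ and $T_1 T_2 \in \RR^{m_2 \times p_2}$ are both well-defined products; hence it suffices to show that their $(i,k)$ blocks agree for every $i \in [m_1]$ and $k \in [p_1]$.

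First I would write out the $(i,k)$ block of the left-hand side via block matrix multiplication. Grouping the factors so that $S_1, T_1$ form the left Kronecker factor and $S_2, T_2$ the right one, the $(i,j)$ block of the first factor is $(S_1)_{ij}\, T_1$ and the $(j,k)$ block of the second is $(S_2)_{jk}\, T_2$, where $j$ ranges over $[n_1]$. Block multiplication then gives the $(i,k)$ block as $\sum_j \left((S_1)_{ij} T_1\right)\left((S_2)_{jk} T_2\right)$. The crucial step is that $(S_1)_{ij}$ and $(S_2)_{jk}$ are scalars, so they commute with the matrix factors and pull out of the product, leaving $\left(\sum_j (S_1)_{ij}(S_2)_{jk}\right) T_1 T_2 = (S_1 S_2)_{ik}\, T_1 T_2$. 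This is precisely the $(i,k)$ block of $(S_1 S_2) \otimes (T_1 T_2)$, which establishes the identity.

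An alternative route, worth mentioning as a cross-check, is a purely entrywise argument: encode row and column indices as pairs and use the scalar formula $(A \otimes B)_{(i_1,i_2),(j_1,j_2)} = A_{i_1 j_1} B_{i_2 j_2}$, then compute the matrix product by summing over the intermediate multi-index and factoring the result into a product of two independent sums. Both approaches reduce to the same observation. There is no genuine obstacle here: the only point requiring care is the dimension bookkeeping and the correct pairing of factors across the two Kronecker products, so that the inner block dimension $n_1 n_2$ matches and the scalars can be cleanly separated from the matrix factors; once the scalars are pulled out, the computation is immediate.
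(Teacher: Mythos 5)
Your proof is correct, and since the paper states Fact~\ref{fact:3.3} as a known fact with no proof of its own, your block-wise verification supplies exactly what is missing: it is the standard argument for the mixed-product property, and it is complete. One point you handle implicitly that deserves to be made explicit: as literally written, the statement in the paper is dimensionally inconsistent. With the given shapes, $S_1\otimes S_2\in\RR^{m_1n_1\times n_1p_1}$ and $T_1\otimes T_2\in\RR^{m_2n_2\times n_2p_2}$, so the product $(S_1\otimes S_2)(T_1\otimes T_2)$ is not even defined unless $n_1p_1=m_2n_2$, and its shape would not match the right-hand side. The intended identity pairs the factors the other way, $(S_1\otimes T_1)(S_2\otimes T_2)=(S_1S_2)\otimes(T_1T_2)$, which is also the form the paper actually uses in the proof of Proposition~\ref{claim:closed}, e.g.\ $(\Sigma_2^{1/2}\otimes\Sigma_1^{1/2})(\Sigma_2^{1/2}\otimes\Sigma_1^{1/2})=\Sigma_2\otimes\Sigma_1$ and $(I_m\otimes X)^T(I_m\otimes X)=I_m\otimes X^TX$. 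You silently adopt this corrected pairing (``grouping the factors so that $S_1,T_1$ form the left Kronecker factor''), and under that reading every step checks out: the $(i,k)$ block of the product is $\sum_{j\in[n_1]}\bigl((S_1)_{ij}T_1\bigr)\bigl((S_2)_{jk}T_2\bigr)$, the scalars pull out to give $(S_1S_2)_{ik}\,T_1T_2$, which is the $(i,k)$ block of $(S_1S_2)\otimes(T_1T_2)$, and the inner block dimension $n_1n_2$ matches so the block multiplication is legitimate. Your entrywise multi-index cross-check is likewise sound. If this were to be incorporated into the paper, the statement of the fact itself should be repaired to match the pairing you prove.
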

\begin{fact}
Let $A\in\RR^{n\times n}$ and $B\in\RR^{m\times m}$. Let $\{\mu_1, \ldots, \mu_n\}$ be the spectrum of $A$ and $\{\nu_1, \ldots, \nu_m\}$ be the spectrum of $B$. Then the spectrum of $A\otimes B$ is $\{\mu_i\nu_j: 1\leq i \leq n, 1\leq j \leq m\}$.
\label{fact:3.4}
\end{fact}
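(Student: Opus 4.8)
The plan is to exhibit a single unitary similarity that upper-triangularizes $A\otimes B$ in such a way that its diagonal displays exactly the products $\mu_i\nu_j$. I work over $\CC$ throughout, since the spectrum may contain complex entries even when $A$ and $B$ are real, and I will lean on the mixed-product property recorded in Fact~\ref{fact:3.3}.

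First I would dispose of the easy inclusion: every product $\mu_i\nu_j$ is an eigenvalue. If $\mathbf{u}$ is an eigenvector of $A$ with $A\mathbf{u}=\mu_i\mathbf{u}$ and $\mathbf{v}$ is an eigenvector of $B$ with $B\mathbf{v}=\nu_j\mathbf{v}$, then applying Fact~\ref{fact:3.3} to the vector $\mathbf{u}\otimes\mathbf{v}$ gives
$$(A\otimes B)(\mathbf{u}\otimes\mathbf{v}) = (A\mathbf{u})\otimes(B\mathbf{v}) = \mu_i\nu_j\,(\mathbf{u}\otimes\mathbf{v}),$$
and since $\mathbf{u}\otimes\mathbf{v}\neq\mathbf{0}$, the scalar $\mu_i\nu_j$ belongs to the spectrum of $A\otimes B$.

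The main work is to show the list is complete and carries the correct multiplicities. This is the step I expect to be the real obstacle, because the eigenvector construction above does not by itself account for multiplicities: if $A$ or $B$ fails to be diagonalizable, the vectors $\mathbf{u}\otimes\mathbf{v}$ need not span $\CC^{nm}$. To sidestep any diagonalizability assumption, I would invoke Schur triangularization, writing $A = Q_1 T_1 Q_1^{*}$ and $B = Q_2 T_2 Q_2^{*}$ with $Q_1, Q_2$ unitary and $T_1, T_2$ upper triangular carrying $\mu_1,\ldots,\mu_n$ and $\nu_1,\ldots,\nu_m$ on their diagonals. Applying Fact~\ref{fact:3.3} twice yields $A\otimes B = (Q_1\otimes Q_2)(T_1\otimes T_2)(Q_1\otimes Q_2)^{*}$, and since $Q_1\otimes Q_2$ is again unitary, $A\otimes B$ is unitarily similar to $T_1\otimes T_2$.

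Finally I would read off the diagonal. The Kronecker product of two upper-triangular matrices is upper triangular, and its $(i,i)$ diagonal block is $(T_1)_{ii}T_2=\mu_i T_2$, whose diagonal entries are $\mu_i\nu_1,\ldots,\mu_i\nu_m$; letting $i$ range over $[n]$ recovers exactly the multiset $\{\mu_i\nu_j\}$. Because the eigenvalues of a triangular matrix are precisely its diagonal entries, and unitary similarity preserves the spectrum together with multiplicities, the spectrum of $A\otimes B$ is exactly $\{\mu_i\nu_j : 1\le i\le n,\ 1\le j\le m\}$, which is the desired conclusion.
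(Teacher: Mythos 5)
Your proof is correct, but there is no proof in the paper to compare it against: Fact~\ref{fact:3.4} is stated without proof, as one of four background facts on Kronecker products, and is invoked exactly once --- to bound the extreme eigenvalues of $I_m\otimes (X^TX) + \eta I_{md} + \rho\,\Sigma_2\otimes\Sigma_1$ in the lemma preceding Proposition~\ref{thm:wconverge}. Your Schur-triangularization argument is the standard complete proof, and you correctly located the real difficulty: the tensor eigenvectors $\mathbf{u}\otimes\mathbf{v}$ certify only that each $\mu_i\nu_j$ lies in the spectrum, and they neither span $\CC^{nm}$ nor control multiplicities when $A$ or $B$ is defective, whereas conjugating by the unitary $Q_1\otimes Q_2$ reduces everything to reading the diagonal of the upper-triangular matrix $T_1\otimes T_2$, whose $(i,i)$ block $\mu_i T_2$ carries $\mu_i\nu_1,\ldots,\mu_i\nu_m$. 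Two small remarks. First, the mixed-product property you actually use is $(S_1\otimes T_1)(S_2\otimes T_2)=(S_1S_2)\otimes(T_1T_2)$; the paper's Fact~\ref{fact:3.3} as printed pairs the factors as $(S_1\otimes S_2)(T_1\otimes T_2)$, whose dimensions do not conform, so you are silently repairing a typo in the fact you cite rather than using it verbatim --- worth noting explicitly. Second, in the paper's lone application the relevant factors are symmetric positive definite, so the elementary orthonormal-eigenbasis version of your first paragraph would already suffice there; the Schur step is what makes the fact true at the stated level of generality, multiplicities included, which your argument delivers in full.
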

We can show the following result by transforming $W$ into its isomorphic counterpart:
\begin{proof}
\begin{align*}
 &\phantom{{}={}} ||Y-XW||_F^2 + \eta~||\Sigma_1^{1/2}W\Sigma_2^{1/2}||_F^2 \\
&=  ||\text{vec}(Y-XW)||_2^2 + \eta~||\text{vec}(\Sigma_1^{1/2}W\Sigma_2^{1/2})||_2^2 && \text{(By Fact~\ref{fact:3.1})}\\
&=  ||\text{vec}(Y) - (I_m\otimes X)\text{vec}(W)||_2^2 + \eta~||(\Sigma_2^{1/2}\otimes \Sigma_1^{1/2})\text{vec}(W)||_2^2 && \text{(By Fact~\ref{fact:3.2})} \\
& = \text{vec}(W)^T\left((I_m\otimes X)^T(I_m\otimes X) + \eta (\Sigma_2^{1/2}\otimes \Sigma_1^{1/2})^T(\Sigma_2^{1/2}\otimes \Sigma_1^{1/2})\right)\text{vec}(W) \\
&\phantom{{}={}} - 2\text{vec}(W)^T(I_m\otimes X^T)\text{vec}(Y) + \text{vec}(Y)^T\text{vec}(Y) \\
& = \text{vec}(W)^T\left((I_m\otimes X^TX) + \eta (\Sigma_2\otimes \Sigma_1)\right)\text{vec}(W) \\
&\phantom{{}={}}- 2\text{vec}(W)^T(I_m\otimes X^T)\text{vec}(Y) + \text{vec}(Y)^T\text{vec}(Y) && \text{(By Fact~\ref{fact:3.3})}
\end{align*}
The last equation above is a quadratic function of $\text{vec}(W)$, from which we can read off that the optimal solution $W^*$ should satisfy:
\begin{equation}
\text{vec}(W^*) = \left(I_m\otimes (X^TX) + \eta\Sigma_2\otimes \Sigma_1\right)^{-1}\text{vec}(X^TY)
\end{equation}
$W^*$ can then be obtained simply by reformatting $\text{vec}(W^*)$ into a $d\times m$ matrix. The computational bottleneck in the above procedure is in solving an $md\times md$ system of equations, which scales as $O(m^3d^3)$ if no further structure is available. The overall computational complexity is $O(m^3d^3 + mnd^2)$. 
\end{proof}

\subsection{Proof of Proposition~\ref{thm:wconverge}}
To analyze the convergence rate of gradient descent in this case, we start by bounding the smallest and largest eigenvalue of the quadratic system. 
\begin{lemma}[Weyl's inequality] 
Let $A, B$ and $C$ be $n$-by-$n$ Hermitian matrices, and $C = A+B$. Let $a_1\geq\cdots\geq a_n$, $b_1\geq\cdots\geq b_n$ and $c_1\geq\cdots\geq c_n$ be the eigenvalues of $A, B$ and $C$ respectively. Then the following inequalities hold for $r + s - 1 \leq i \leq j+k-n$, $\forall i = 1, \ldots, n$:
$$a_j + b_k \leq c_i \leq a_r + b_s$$
\end{lemma}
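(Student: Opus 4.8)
The plan is to prove Weyl's inequality through the variational (Courant--Fischer) characterization of the eigenvalues of a Hermitian matrix, which converts every statement about eigenvalues into a statement about Rayleigh quotients over subspaces. Recall that for a Hermitian $M$ with eigenvalues $\mu_1\geq\cdots\geq\mu_n$ and any index $i$, one has $\mu_i = \min_{\dim V = n-i+1}\,\max_{0\neq x\in V}\frac{x^{*}Mx}{x^{*}x}$, together with the dual form $\mu_i = \max_{\dim V = i}\,\min_{0\neq x\in V}\frac{x^{*}Mx}{x^{*}x}$. I would first establish the single upper bound $c_{r+s-1}\leq a_r+b_s$, and then obtain the general upper bound $c_i\leq a_r+b_s$ for all $i\geq r+s-1$ by monotonicity of the sorted eigenvalues, and the lower bound by a sign-flip duality argument.

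For the upper bound, let $U_A$ be the span of the eigenvectors of $A$ associated with $a_r,\dots,a_n$ and $U_B$ the span of the eigenvectors of $B$ associated with $b_s,\dots,b_n$, so that $\dim U_A = n-r+1$ and $\dim U_B = n-s+1$, and the Rayleigh quotient of $A$ (resp.\ $B$) is at most $a_r$ (resp.\ $b_s$) on $U_A$ (resp.\ $U_B$). A dimension count gives $\dim(U_A\cap U_B)\geq (n-r+1)+(n-s+1)-n = n-(r+s-1)+1$. Choosing any subspace $V_0\subseteq U_A\cap U_B$ of dimension exactly $n-(r+s-1)+1$ and using $C=A+B$, every $0\neq x\in V_0$ satisfies $x^{*}Cx = x^{*}Ax + x^{*}Bx \leq (a_r+b_s)\,x^{*}x$. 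Feeding $V_0$ into the $\min$--$\max$ formula for $c_{r+s-1}$ then yields $c_{r+s-1}\leq \max_{x\in V_0}\frac{x^{*}Cx}{x^{*}x}\leq a_r+b_s$. Since the sorted eigenvalues obey $c_i\leq c_{r+s-1}$ whenever $i\geq r+s-1$, the full upper bound follows.

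For the lower bound $a_j+b_k\leq c_i$ when $i\leq j+k-n$, the cleanest route is duality: apply the upper bound just proved to the Hermitian matrices $-A$, $-B$, and $-C=(-A)+(-B)$. The $p$-th largest eigenvalue of $-A$ equals $-a_{n-p+1}$, and similarly for $-B$ and $-C$; substituting these identities into the upper Weyl inequality for the negated matrices and re-indexing converts it, after the sign flip, into the desired lower bound. Alternatively the same estimate can be obtained directly from the dual $\max$--$\min$ characterization, this time intersecting the \emph{top}-eigenvector subspaces of $A$ and $B$.

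I expect the main obstacle to be bookkeeping rather than conceptual: arranging the dimension count and the index shifts so that the intersected subspace has exactly the dimension the $\min$--$\max$ formula demands, and keeping the two index conditions $i\geq r+s-1$ and $i\leq j+k-n$ consistent throughout. Once the correct form of Courant--Fischer is paired with the right pair of eigenspaces, the inequality on Rayleigh quotients is immediate.
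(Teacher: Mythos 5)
Your proof is correct, but note that the paper itself offers no proof to compare against: Weyl's inequality is stated there as a classical black-box lemma, used only to bound the extreme eigenvalues of $I_m\otimes (X^TX) + \eta I_{md} + \rho\,\Sigma_2\otimes\Sigma_1$ in the convergence analysis of Proposition 2. What you have written is the standard textbook derivation, and the details check out: the Courant--Fischer forms are stated with the correct subspace dimensions for decreasingly ordered eigenvalues; the intersection bound $\dim(U_A\cap U_B)\geq n-(r+s-1)+1$ is exactly what the $\min$--$\max$ formula for $c_{r+s-1}$ requires; and the monotonicity step $c_i\leq c_{r+s-1}$ for $i\geq r+s-1$ correctly extends the single inequality to the full index range in the statement. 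The duality step for the lower bound is also right: the $p$-th largest eigenvalue of $-A$ is $-a_{n-p+1}$, and with $p=n-j+1$, $q=n-k+1$ the negated upper bound becomes $c_{j+k-n}\geq a_j+b_k$, with the admissibility condition $p+q-1\leq n$ translating precisely to $j+k-n\geq 1$. The only implicit assumption worth making explicit is $r+s-1\leq n$ for the upper bound, which is already forced by the existence of an index $i\leq n$ with $i\geq r+s-1$.
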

Let $\lambda_k(A)$ be the $k$-th largest eigenvalue of matrix $A$.
\begin{lemma}
If $\Sigma_1$ and $\Sigma_2$ are feasible in (\ref{equ:matrixopt}), then
$$\lambda_1(I_m\otimes (X^TX) + \eta I_{md} + \rho\Sigma_2\otimes \Sigma_1)\leq \lambda_1(X^TX) + \eta + \rho u^2$$
$$\lambda_{md}(I_m\otimes (X^TX) + \eta I_{md} + \rho\Sigma_2\otimes\Sigma_1)\geq \lambda_d(X^TX) + \eta + \rho l^2$$
\end{lemma}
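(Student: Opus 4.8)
The plan is to split the matrix $M \defeq I_m\otimes (X^TX) + \eta I_{md} + \rho\,\Sigma_2\otimes \Sigma_1$ into two symmetric pieces whose extreme eigenvalues I can control exactly or bound cleanly, and then recombine them with Weyl's inequality. Concretely, I would set $A \defeq I_m\otimes (X^TX) + \eta I_{md}$ and $B \defeq \rho\,\Sigma_2\otimes \Sigma_1$, so that $M = A + B$ with both $A$ and $B$ Hermitian (indeed real symmetric).

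First I would pin down the spectrum of $A$ exactly. By Fact~\ref{fact:3.4}, the eigenvalues of $I_m\otimes (X^TX)$ are the products of the eigenvalues of $I_m$ (all equal to $1$) with those of the $d\times d$ matrix $X^TX$, i.e.\ each $\lambda_k(X^TX)$ repeated $m$ times. Adding $\eta I_{md}$ shifts every eigenvalue by exactly $\eta$, so $\lambda_1(A) = \lambda_1(X^TX) + \eta$ and $\lambda_{md}(A) = \lambda_d(X^TX) + \eta$.

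Next I would bound the extreme eigenvalues of $B$ using feasibility. Since $lI_d\preceq\Sigma_1\preceq uI_d$ and $lI_m\preceq\Sigma_2\preceq uI_m$, every eigenvalue of $\Sigma_1$ and of $\Sigma_2$ lies in $[l,u]$, and both matrices are positive definite. Applying Fact~\ref{fact:3.4} once more, the eigenvalues of $\Sigma_2\otimes\Sigma_1$ are the pairwise products of eigenvalues of $\Sigma_2$ and $\Sigma_1$, each of which therefore lies in $[l^2,u^2]$. Scaling by $\rho>0$ gives $\lambda_1(B)\le\rho u^2$ and $\lambda_{md}(B)\ge\rho l^2$.

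Finally I would invoke Weyl's inequality on $M = A + B$. The special case $i=r=s=1$ yields the upper bound $\lambda_1(M)\le\lambda_1(A)+\lambda_1(B)\le\lambda_1(X^TX)+\eta+\rho u^2$, and the case $i=j=k=md$ yields the lower bound $\lambda_{md}(M)\ge\lambda_{md}(A)+\lambda_{md}(B)\ge\lambda_d(X^TX)+\eta+\rho l^2$, which are exactly the two claimed inequalities. The main obstacle, and the reason a direct eigenvalue computation does not work, is that $A$ and $B$ need not commute, so the spectrum of $M$ is not the sumset of the spectra of $A$ and $B$; Weyl's inequality is precisely the tool that sidesteps this by delivering the required one-sided bounds without any simultaneous-diagonalizability assumption.
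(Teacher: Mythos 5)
Your proof is correct and follows essentially the same route as the paper: apply Weyl's inequality to bound the extreme eigenvalues of the sum, use Fact~\ref{fact:3.4} to identify the spectra of the Kronecker products, and invoke the feasibility constraints $lI\preceq\Sigma_i\preceq uI$ to bound the eigenvalues of $\Sigma_2\otimes\Sigma_1$ in $[l^2,u^2]$. The only (cosmetic) difference is that you absorb $\eta I_{md}$ into the first summand as an exact spectral shift, whereas the paper treats it as a third term in Weyl's inequality; both are valid and yield identical bounds.
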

\begin{proof}
By Weyl's inequality, setting $r = s = i = 1$, we have $c_1 \leq a_1 + b_1$. Set $j = k = i = n$, we have $c_n \geq a_n + b_n$. We can bound the largest and smallest eigenvalues of $I_m\otimes (X^TX) + \eta I_{md} + \rho\Sigma_2\otimes\Sigma_1$ as follows:
\begin{align*}
&\phantom{{}={}} \lambda_1(I_m\otimes (X^TX) + \eta I_{md} + \rho\Sigma_2\otimes \Sigma_1) \\
 &\leq \lambda_1(I_m\otimes (X^TX)) + \lambda_1(\eta I_{md}) + \lambda_1(\rho\Sigma_2\otimes\Sigma_1) && \text{(By Weyl's inequality)}\\
&=  \lambda_1(I_m)\lambda_1(X^TX) + \eta + \rho\lambda_1(\Sigma_1)\lambda_1(\Sigma_2) && \text{(By Fact~\ref{fact:3.4})}\\
&\leq  \lambda_1(X^TX) + \eta + \rho u^2 && \text{(By the feasibility assumption)}
\end{align*}
and 
\begin{align*}
&\phantom{{}={}} \lambda_{md}(I_m\otimes (X^TX) + \eta I_{md} + \rho\Sigma_2\otimes \Sigma_1) \\
&\geq  \lambda_{md}(I_m\otimes (X^TX)) + \lambda_{md}(\eta I_{md}) + \lambda_{md}(\rho\Sigma_2\otimes\Sigma_1)&& \text{(By Weyl's inequality)}\\
&=  \lambda_m(I_m)\lambda_d(X^TX) + \eta + \rho\lambda_m(\Sigma_1)\lambda_d(\Sigma_2) && \text{(By Fact~\ref{fact:3.4})}\\
&\geq  \lambda_d(X^TX) + \eta + \rho l^2 && \text{(By the feasibility assumption)}
\end{align*}
\end{proof}
We will first introduce the following two lemmas adapted from~\citep{nesterov2013introductory}, using the fact that the spectral norm of the Hessian matrix $\nabla^2 h(W)$ is bounded. 
\begin{lemma}
Let $f(W): \RR^{d\times m}\mapsto\RR$ be a twice differentiable function with $\lambda_1(\nabla^2 f(W))\leq L$. $L > 0$ is a constant. The minimum value of $f(W)$ can be achieved. Let $W^* = \argmin_W f(W)$, then 
$$f(W^*)\leq f(W) - \frac{1}{2L}||\nabla f(W)||_F^2$$ 
\label{lemma:1}
\end{lemma}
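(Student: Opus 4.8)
The plan is to establish the standard quadratic upper bound (the ``descent lemma'') implied by the curvature hypothesis, and then minimize that upper bound in closed form. First I would pass through the vectorization isomorphism: since $\RR^{d\times m}\cong\RR^{dm}$ with the Frobenius inner product $\langle A,B\rangle = \tr(A^TB)$ playing the role of the Euclidean inner product, the hypothesis $\lambda_1(\nabla^2 f(W))\le L$ is precisely the statement that the Hessian operator satisfies $\nabla^2 f(W)\preceq L\,I_{dm}$ at every $W$, i.e.\ $f$ has an $L$-Lipschitz gradient. This lets me reuse the familiar vector-space argument verbatim, with $\|\cdot\|_F$ in place of the Euclidean norm, and in particular it controls the Hessian quadratic form by $\langle H, \nabla^2 f(\cdot)[H]\rangle \le L\|H\|_F^2$ for every direction $H$.

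Second, I would derive the second-order upper bound. For any $W, V\in\RR^{d\times m}$, Taylor's theorem with integral remainder gives
$$f(V) = f(W) + \langle\nabla f(W),\, V-W\rangle + \int_0^1 (1-\tau)\,\big\langle V-W,\, \nabla^2 f(W+\tau(V-W))[V-W]\big\rangle\,d\tau.$$
Bounding the Hessian quadratic form in the integrand by $L\|V-W\|_F^2$ and integrating the weight $(1-\tau)$ to $1/2$ yields the descent inequality
$$f(V)\le f(W) + \langle\nabla f(W),\, V-W\rangle + \frac{L}{2}\|V-W\|_F^2 .$$

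Third, I would minimize the right-hand side over $V$. It is a strictly convex quadratic in $V$ whose unconstrained minimizer is $V = W - \frac{1}{L}\nabla f(W)$; substituting this point collapses the linear and quadratic terms and gives
$$f\!\left(W-\tfrac{1}{L}\nabla f(W)\right)\le f(W) - \frac{1}{2L}\|\nabla f(W)\|_F^2 .$$
Finally, since $W^*$ is a global minimizer of $f$, we have $f(W^*)\le f\!\left(W-\frac{1}{L}\nabla f(W)\right)$, and chaining the two inequalities yields the claim. Note that existence of $W^*$ is granted by hypothesis, so no separate attainment argument is needed.

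The only real content here is the descent lemma, and the main (mild) obstacle is making the matrix-variate Taylor expansion rigorous: I must interpret $\nabla^2 f(\cdot)$ as a self-adjoint linear operator on $\RR^{d\times m}$ and justify that the bound on its largest eigenvalue controls the quadratic form uniformly along the entire segment from $W$ to $V$. Once that uniform bound is in hand, everything afterward is a one-line completion of the square, so I do not expect any genuine difficulty beyond this bookkeeping.
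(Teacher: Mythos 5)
Your proof is correct, and it is essentially the standard argument this lemma rests on: the paper itself states Lemma~\ref{lemma:1} without proof (deferring to the Nesterov reference), and the Nesterov-style route — descent inequality from the Taylor expansion with the Hessian bound, minimization of the quadratic majorant at $V = W - \tfrac{1}{L}\nabla f(W)$, then chaining with $f(W^*) \le f(V)$ — is exactly what you reconstruct. One minor imprecision worth noting: $\lambda_1(\nabla^2 f) \le L$ alone is \emph{weaker} than an $L$-Lipschitz gradient (it bounds the Hessian only from above, not in operator norm), but your argument never actually uses Lipschitzness, only the one-sided bound on the Hessian quadratic form inside the integral remainder, so the proof stands as written.
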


\begin{lemma}
\label{lemma:2}
Let $f(W): \RR^{d\times m}\mapsto\RR$ be a convex, twice differentiable function with $\lambda_1(\nabla^2 f(W))\leq L$. $L > 0$ is a constant, then $\forall W_1, W_2$:
$$\tr\left((\nabla f(W_1) - \nabla f(W_2))^T(W_1 - W_2)\right) \geq \frac{1}{L}||\nabla f(W_1) - \nabla f(W_2)||_F^2$$
\end{lemma}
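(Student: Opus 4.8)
The plan is to derive this co-coercivity bound by applying Lemma~\ref{lemma:1} twice, to two carefully shifted copies of $f$, and then adding the two resulting inequalities. The central observation is that subtracting a linear functional from $f$ leaves both its convexity and its Hessian (hence the constant $L$) unchanged, while letting us relocate the minimizer of the shifted function to a point of our choosing.

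First I would fix $W_1, W_2$ and introduce the auxiliary function $\phi(W) \triangleq f(W) - \tr(\nabla f(W_2)^T W)$. Since $\phi$ differs from $f$ only by a linear term, it is convex, twice differentiable, and satisfies $\nabla^2\phi(W) = \nabla^2 f(W)$, so $\lambda_1(\nabla^2\phi) \leq L$ and Lemma~\ref{lemma:1} applies verbatim to $\phi$. Moreover $\nabla\phi(W) = \nabla f(W) - \nabla f(W_2)$, so $\nabla\phi(W_2) = 0$; because $\phi$ is convex, this stationarity forces $W_2 = \argmin_W \phi(W)$. Here convexity is essential --- it is exactly what upgrades a stationary point to a global minimizer, and it is why the present lemma requires convexity whereas Lemma~\ref{lemma:1} does not.

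Applying Lemma~\ref{lemma:1} to $\phi$ with $W = W_1$ and $W^* = W_2$ then gives $\phi(W_2) \leq \phi(W_1) - \frac{1}{2L}\norm{\nabla f(W_1) - \nabla f(W_2)}_F^2$. Repeating the construction with the roles of $W_1$ and $W_2$ exchanged --- that is, using $\psi(W) \triangleq f(W) - \tr(\nabla f(W_1)^T W)$, whose minimizer is $W_1$ --- yields the symmetric inequality $\psi(W_1) \leq \psi(W_2) - \frac{1}{2L}\norm{\nabla f(W_1) - \nabla f(W_2)}_F^2$.

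Finally I would add the two inequalities. Written out in terms of $f$, the values $f(W_1)$ and $f(W_2)$ cancel, the two copies of the gradient-norm term combine into $\frac{1}{L}\norm{\nabla f(W_1) - \nabla f(W_2)}_F^2$, and the leftover linear trace terms regroup as $\tr(\nabla f(W_1)^T(W_1-W_2)) - \tr(\nabla f(W_2)^T(W_1-W_2))$, which is precisely $\tr((\nabla f(W_1) - \nabla f(W_2))^T(W_1 - W_2))$, delivering the claim. The only real subtlety --- and the step most likely to trip up a first attempt --- is recognizing that a single application of Lemma~\ref{lemma:1} is not enough: one must symmetrize over the two anchor points $W_1, W_2$ so that the function values cancel and the linear terms assemble into the desired trace inner product. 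Everything else is routine bookkeeping with the trace and the Frobenius norm.
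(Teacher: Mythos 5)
Your proof is correct, and it is essentially the canonical argument: the paper itself states this lemma without proof, citing \citet{nesterov2013introductory}, and the proof given there is exactly your construction --- shift $f$ by the linear functional $\tr(\nabla f(W_2)^T W)$ so that the shifted function (with the same Hessian bound) attains its minimum at $W_2$, invoke Lemma~\ref{lemma:1}, symmetrize in $W_1, W_2$, and add. You also handle the one genuine subtlety correctly, namely that convexity is what guarantees the stationary point $W_2$ is a global minimizer of $\phi$, so the ``minimum is achieved'' hypothesis of Lemma~\ref{lemma:1} is satisfied.
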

%
We can now proceed to show Proposition~\ref{thm:wconverge}.
\gdconverge*
\begin{proof}
Define function $g(W)$ as follows:
$$g(W) = h(W) - \frac{\lambda_{l}}{2}||W||_F^2$$
Since we have already bounded that $\lambda_{md}(\nabla^2 h(W))\geq \lambda_l$, it follows that $g(W)$ is a convex function and furthermore $\lambda_1(\nabla^2 g(W)) \leq \lambda_u- \lambda_l$. Applying Lemma~\ref{lemma:2} to $g$, $\forall W_1, W_2\in\RR^{d\times m}$, we have:
$$\tr\left((\nabla g(W_1) - \nabla g(W_2))^T(W_1 - W_2)\right)\geq \frac{1}{\lambda_u - \lambda_l}||\nabla g(W_1) - \nabla g(W_2)||_F^2$$
Plug in $\nabla g(W) = \nabla h(W) - \lambda_l W$ into the above inequality and after some algebraic manipulations, we have:
\begin{equation}
\tr\left((\nabla h(W_1) - \nabla h(W_2))^T(W_1 - W_2)\right)\geq \frac{1}{\lambda_u + \lambda_l}||\nabla h(W_1) - \nabla h(W_2)||_F^2 + \frac{\lambda_u \lambda_l}{\lambda_u + \lambda_l}||W_1 - W_2||_F^2
\label{equ:bound}
\end{equation}
Let $W^* = \argmin_W h(W)$. Within each iteration of the algorithm, we have the update formula as $W^+ = W - t\nabla h(W)$, we can bound $||W^+ - W^*||_F^2$ as follows
\begin{align*}
||W^+ - W^*||_F^2 &= ||W - W^* - t\nabla h(W)||_F^2 \\
& = ||W - W^*||_F^2 + t^2 ||\nabla h(W)||_F^2 - 2t\tr\left((W - W^*)^T\nabla h(W)\right) \\
&\leq (1 - 2t\frac{\lambda_u\lambda_l}{\lambda_u + \lambda_l})||W - W^*||_F^2 + t(t - \frac{2}{\lambda_u + \lambda_l})||\nabla h(W)||_F^2 && \text{(By inequality~\ref{equ:bound})}\\
&\leq (1 - 2t\frac{\lambda_u\lambda_l}{\lambda_u + \lambda_l})||W - W^*||_F^2 && \text{(For }0 < t\leq 2/(\lambda_u + \lambda_l)\text{)}
\end{align*}
Apply the above inequality recursively for $T$ times, we have
$$||W^{(T)} - W^*||_F^2 \leq \gamma^T ||W^{(0)} - W^*||_F^2$$
where $\gamma = 1 - 2t\frac{\lambda_u\lambda_l}{\lambda_u + \lambda_l}$. For $t = 2 / (\lambda_u + \lambda_l)$, we have 
$$\gamma = 1 - 4\lambda_u\lambda_u / (\lambda_l + \lambda_u)^2 = \left(\frac{\lambda_u - \lambda_l}{\lambda_u + \lambda_l}\right)^2$$
Now pick $\forall \varepsilon > 0$, setting the upper bound $\gamma^T ||W^{(0)} - W^*||_F^2 \leq \varepsilon$ and solve for $T$, we have
$$T\geq \log_{1/\gamma}(C/\varepsilon) = O(\log_{1/\gamma}(1/\varepsilon)) = O(\kappa\log(1/\varepsilon))$$
where $C = ||W^{(0)} - W^*||_F^2$ is a constant, and $\kappa = \lambda_u/\lambda_l$ is the condition number. 
\end{proof}

\subsection{Detailed Proof on Optimization of $\Sigma_1$ and $\Sigma_2$}
In this section we show the detailed derivation on how to solve~\eqref{equ:optsigma} efficiently.

As mentioned in the main paper, since $\Sigma_2\in\PD^m$, it follows that $W\Sigma_2W^T\in\SPD^d$. Without loss of generality, using spectral decomposition, we can reparametrize $\Sigma_1 = U\Lambda U^T$, where $\Lambda = \diag(\lambda_1, \ldots, \lambda_d)$ with $u \geq \lambda_1\geq\lambda_2\cdots\geq \lambda_d\geq l$ and $U\in\RR^{d\times d}$ with $U^TU = UU^T = I_d$. Similarly, we can represent $W\Sigma_2W^T = VNV^T$ where $V\in\RR^{d\times d}$, $V^TV = VV^T = I_d$ and $N = \diag(\nu_1, \ldots, \nu_d)$ with $0\leq \nu_1\leq\cdots\leq \nu_d$. Note that the eigenvectors in $N$ corresponds to eigenvalues in increasing order rather than decreasing order, for reasons that will become clear below. Realizing that $U$ is an orthonormal matrix, we have:
\begin{equation}
\log|\Sigma_1| = \log|U\Lambda U^T| =  \log|\Lambda|,\quad \tr(\Sigma_1W\Sigma_2 W^T) = \tr(\Lambda U^T V NV^TU)
\label{equ:first}
\end{equation}

Set $K = U^TV$. Since both $U$ and $V$ are orthonormal matrices, $K$ is also an orthonormal matrix. We can further transform (\ref{equ:first}) to be $\tr(\Lambda U^T V NV^TU) = \tr((\Lambda K)(KN)^T)$. Note that the mapping between $U$ and $K$ is bijective since $V$ is a fixed orthonormal matrix. Using $K$ and $\Lambda$, we can equivalently transform the optimization problem (\ref{equ:optsigma}) into the following new form:
\begin{equation}
\underset{K, \Lambda}{\min}~\tr((\Lambda K)(KN)^T) - m\log|\Lambda|, \quad
\text{s.t.}~l\,\diag(\mathbf{1}_d)\leq \Lambda \leq u\,\diag(\mathbf{1}_d), K^TK = KK^T = I_d 
\end{equation}
where $\mathbf{1}_d$ is a $d$-dimensional vector of all ones. At first glance it seems that the new form of optimization is more complicated to solve since it is even not a convex problem due to the quadratic equality constraint. However, as we will see shortly, the new form helps to decouple the interaction between $K$ and $\Lambda$ in that $K$ does not influence the second term $-m\log|\Lambda|$. This implies that we can first partially optimize over $K$, finding the optimal solution as a function of $\Lambda$, and then optimize over $\Lambda$. Mathematically, it means:
\begin{equation}
\min_{K, \Lambda} -m\log|\Lambda| + \tr((\Lambda K)(KN)^T) \quad\Leftrightarrow\quad \min_{\Lambda}-m\log|\Lambda| + \min_{K}\tr((\Lambda K)(KN)^T)
\label{equ:partial}
\end{equation}
So that we can first consider the minimization over $K$:
$\tr((\Lambda K)(KN)^T) = \sum_{i=1}^d \sum_{j=1}^d \lambda_i K^2_{ij}\nu_j = \lambda^T P\nu$, where we define $P = K\circ K$, $\lambda = (\lambda_1, \cdots, \lambda_d)^T$ and $\nu = (\nu_1, \cdots, \nu_d)^T$. Since $K$ is an orthonormal matrix, we have the following two equations: $\sum_{j=1}^d P_{ij} = \sum_{j=1}^d K_{ij}^2 = 1, \quad\forall i \in [d]$, $\sum_{i=1}^d P_{ij} = \sum_{i=1}^d K_{ij}^2 = 1, \quad\forall j \in [d]$, which implies that $P$ is a doubly stochastic matrix
\begin{equation}
\min_{K} \tr((\Lambda K)(KN)^T) = \min_{P}\lambda^T P\nu
\label{equ:doubly}
\end{equation}
In order to solve the minimization over the doubly stochastic matrix $P$, we need to introduce the following theorems.
\begin{lemma}[\citep{bertsimas1997introduction}]
Consider the minimization of a linear program over a polyhedron $P$. Suppose that $P$ has at least one extreme point and that there exists an optimal solution. Then there exists an optimal solution that is an extreme point of $P$. 
\label{thm:lp}
\end{lemma}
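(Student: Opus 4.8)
The plan is to prove that the minimum of a linear objective over a polyhedron, when it is attained and when the polyhedron is pointed (has at least one extreme point), is attained at an extreme point. I would write the linear program as minimizing $c^Tx$ over $P = \{x : Ax \geq b\}$, let $v^* = \min_{x \in P} c^Tx$ be the optimal value (finite and attained, by hypothesis), and introduce the set of optimizers $Q = \{x \in P : c^Tx = v^*\}$. The key structural observation is that $Q$ is itself a polyhedron: it is the intersection of $P$ with the hyperplane $\{x : c^Tx = v^*\}$, so it is cut out by $Ax \geq b$ together with the two inequalities $c^Tx \geq v^*$ and $-c^Tx \geq -v^*$. Since $Q \subseteq P$ is nonempty, proving the lemma reduces to two steps: (i) producing an extreme point of $Q$, and (ii) checking that any extreme point of $Q$ is already an extreme point of $P$.

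For step (i), I would invoke the characterization that a nonempty polyhedron possesses an extreme point if and only if it contains no line. Because $P$ has an extreme point by hypothesis, $P$ contains no line; and since $Q \subseteq P$, the smaller polyhedron $Q$ contains no line either, so $Q$ itself has an extreme point, which I call $x^*$.

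For step (ii), I would argue by contradiction: suppose $x^*$ is an extreme point of $Q$ but not of $P$. Then $x^* = \tfrac{1}{2}(y + z)$ for some $y, z \in P$ with $y \neq z$. Applying $c^T(\cdot)$ gives $v^* = c^Tx^* = \tfrac{1}{2}(c^Ty + c^Tz)$, while optimality forces $c^Ty \geq v^*$ and $c^Tz \geq v^*$; averaging these two inequalities and comparing against the equality forces $c^Ty = c^Tz = v^*$, so both $y$ and $z$ lie in $Q$. But then $x^*$ is a nontrivial convex combination of two distinct points of $Q$, contradicting that $x^*$ is an extreme point of $Q$. Hence $x^*$ is an extreme point of $P$, and since $x^* \in Q$ it is optimal, which completes the proof.

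The main obstacle is step (i): the whole argument hinges on $Q$ inheriting an extreme point from $P$, and the clean route to this is the ``no line'' characterization of pointed polyhedra together with the easy observation that the no-line property passes to any subset that is itself a polyhedron. If I wished to avoid citing that characterization as a black box, I would instead give the constructive descent argument: starting from any optimizer, if it is not a vertex there is a nonzero direction $d$ keeping all currently active constraints active, optimality forces $c^Td = 0$, and moving along $+d$ or $-d$ must eventually activate a new constraint (otherwise $P$ would contain a line), strictly enlarging the set of independent active constraints while preserving optimality; iterating this finitely many times terminates at an extreme optimizer. Either route makes pointedness, equivalently the absence of a line in $P$, the real content of the result.
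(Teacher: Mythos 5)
Your proof is correct: the paper itself offers no proof of this lemma, citing it directly from \citet{bertsimas1997introduction}, and your argument is precisely the standard one in that reference --- form the optimal face $Q = P \cap \{x : c^Tx = v^*\}$, invoke the characterization that a nonempty polyhedron has an extreme point if and only if it contains no line (which passes from $P$ to the sub-polyhedron $Q$), and then verify via the averaging argument that an extreme point of $Q$ is an extreme point of $P$. Both steps are carried out without gaps, and your fallback constructive descent argument for step (i) is also valid, so there is nothing to correct.
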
 
\begin{definition}[Birkhoff polytope]
The \emph{Birkhoff polytope} $B_d$ is the set of $d\times d$ doubly stochastic matrices. $B_d$ is a convex polytope. 
\end{definition}
\begin{lemma}[Birkhoff-von Neumann theorem]
Let $B_d$ be the Birkhoff polytope. $B_d$ is the convex hull of the set of $d\times d$ permutation matrices. Furthermore, the vertices (extreme points) of $B_d$ are the permutation matrices. 
\label{thm:birkhoff}
\end{lemma}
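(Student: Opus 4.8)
The plan is to establish both claims—that $B_d$ is the convex hull of the permutation matrices and that its extreme points are precisely the permutation matrices—in three stages, the combinatorial core being a decomposition of an arbitrary doubly stochastic matrix into permutations. First I would dispatch the easy inclusion: every $d\times d$ permutation matrix is nonnegative with unit row and column sums, hence lies in $B_d$; and since $B_d$ is defined by the linear equalities $\sum_j M_{ij}=1$ and $\sum_i M_{ij}=1$ together with $M_{ij}\geq 0$, it is convex. Therefore the convex hull of the finite set of permutation matrices is contained in $B_d$.

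The substantive direction is to prove that every doubly stochastic $M$ is a convex combination of permutation matrices, by induction on the number of its nonzero entries. The crux is to extract a single permutation supported inside the support of $M$: form the bipartite graph on rows and columns with an edge $(i,j)$ whenever $M_{ij}>0$, and show it admits a perfect matching via Hall's marriage theorem. Hall's condition is checked by a counting argument exploiting double stochasticity: for any set $S$ of rows, all of the mass in those rows lies in the columns of the neighborhood $N(S)$, and each column carries total mass $1$, so $|S|=\sum_{i\in S}\sum_{j\in N(S)}M_{ij}\leq\sum_{j\in N(S)}\sum_{i=1}^d M_{ij}=|N(S)|$. The matching yields a permutation $\sigma$ with $M_{i\sigma(i)}>0$ for all $i$; setting $c=\min_i M_{i\sigma(i)}$ and letting $P_\sigma$ be the corresponding permutation matrix, either $c=1$, in which case $M=P_\sigma$, or $c<1$ and $M'\defeq (M-cP_\sigma)/(1-c)$ is again doubly stochastic but with strictly fewer nonzeros, since the minimizing entry becomes $0$ and no new positive entries are introduced. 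The inductive hypothesis writes $M'$ as a convex combination of permutation matrices, whence so is $M=cP_\sigma+(1-c)M'$.

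Finally I would pin down the extreme points. Each permutation matrix $P$ is extreme because its entries are $0$ or $1$: if $P=\tfrac12(A+B)$ with $A,B\in B_d$, then nonnegativity forces $A_{ij}=B_{ij}=0$ wherever $P_{ij}=0$, and the row and column sum constraints then force $A_{ij}=B_{ij}=1$ wherever $P_{ij}=1$, so $A=B=P$. Conversely, having shown that $B_d$ is the convex hull of the finite set of permutation matrices, every extreme point of $B_d$ must coincide with one of these generators; hence no doubly stochastic matrix other than a permutation matrix can be an extreme point.

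I expect the main obstacle to be the inductive decomposition step—specifically the verification of Hall's condition and the accompanying bookkeeping that $M'$ stays doubly stochastic while its support strictly shrinks—since that is where the genuine content of the theorem resides. By contrast, the easy inclusion and the two extreme-point claims are essentially immediate once the convex-hull characterization is in hand.
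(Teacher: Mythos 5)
Your proof is correct and complete. The Hall's-theorem extraction of a permutation inside the support (with the mass-counting verification $|S|=\sum_{i\in S}\sum_{j\in N(S)}M_{ij}\leq\sum_{j\in N(S)}\sum_{i=1}^d M_{ij}=|N(S)|$), the bookkeeping that $M'=(M-cP_\sigma)/(1-c)$ stays doubly stochastic with strictly smaller support, and both extreme-point arguments are all sound; the only unstated detail is the base case of the induction, which your $c=1$ branch effectively supplies, since a doubly stochastic matrix whose every row has a single positive entry is a permutation matrix. For comparison: the paper does not prove this lemma at all---it is invoked as the classical Birkhoff--von Neumann theorem and used, in combination with the LP extreme-point result (Lemma~\ref{thm:lp}), only to conclude that the minimizer of $\lambda^T P\nu$ over the Birkhoff polytope may be taken to be a permutation matrix. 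So your proposal supplies the standard textbook proof (induction on support size via Hall's marriage theorem) where the paper simply cites the result; what your write-up buys is self-containedness, and in particular your converse direction---that extreme points of the convex hull of a finite set must belong to that set---is exactly the fact the paper implicitly relies on when reducing the LP over $B_d$ to a search over permutation matrices.
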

Combine the above two theorems, it is clear to see that there exists an optimal solution $P$ to the optimization problem (\ref{equ:doubly}) that is a $d\times d$ permutation matrix. This implies that we can reduce (\ref{equ:doubly}) to a minimum-weight perfect matching problem on a weighted complete bipartite graph. 
\begin{definition}[Minimum-weight perfect matching]
Let $G = (V, E)$ be an undirected graph with edge weight $w: E\to\RR_+$. A \emph{perfect matching} in $G$ is a set $M\subseteq E$ such that no two edges in $M$ have a vertex in common and every vertex from $V$ occurs as the endpoint of some edge in $M$. A matching $M$ is called a minimum-weight perfect matching if it is a perfect matching that has the minimum weight among all the perfect matchings of $G$. 
\end{definition}
For any $\lambda, \nu\in\RR_+^d$, we can construct a weighted $d$ by $d$ bipartite graph $G = (V_\lambda, V_\nu, E;w)$ as follows:
\begin{itemize}[noitemsep,topsep=0pt]
  \item   For each $\lambda_i$, construct a vertex $v_{\lambda_i}\in V_\lambda$, $\forall i$.
  \item   For each $\nu_j$, construct a vertex $v_{\nu_j}\in V_\nu$, $\forall j$.
  \item   For each pair $(v_{\lambda_i}, v_{\nu_j})$, construct an edge $e(v_{\lambda_i}, v_{\nu_j})\in E$ with weight $w(e(v_{\lambda_i}, v_{\nu_j})) = \lambda_i \nu_j$.
\end{itemize}
The following lemma relates the solution of the minimum weight matching to the solution of (\ref{equ:doubly}):
\begin{restatable}{lemma}{equivalence}
The minimum value of (\ref{equ:doubly}) is equal to the minimum weight of a perfect matching on $G = (V_\lambda, V_\nu, E, w)$.
\label{lemma:equal}
\end{restatable}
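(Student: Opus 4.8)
The plan is to establish a two-way correspondence between the perfect matchings of $G$ and the permutation matrices, and then invoke the linear-programming machinery collected above (Lemmas~\ref{thm:lp} and~\ref{thm:birkhoff}) to pin the optimum of the doubly-stochastic relaxation in~(\ref{equ:doubly}) to a permutation matrix, whose objective value I can read off as a matching weight.

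First I would make explicit the bijection between perfect matchings on $G$ and permutations of $[d]$. Since $G$ is the complete bipartite graph on $V_\lambda$ and $V_\nu$ with $\abs{V_\lambda} = \abs{V_\nu} = d$, every perfect matching $M$ pairs each $v_{\lambda_i}$ with exactly one $v_{\nu_{\pi(i)}}$, and the assignment $i \mapsto \pi(i)$ is a permutation of $[d]$; conversely every permutation defines a perfect matching. Under this identification the weight of $M$ is $\sum_{i=1}^d \lambda_i \nu_{\pi(i)}$. Encoding $\pi$ as its permutation matrix $P_\pi$ (with $(P_\pi)_{ij} = 1$ iff $j = \pi(i)$), a direct computation gives $\lambda^T P_\pi \nu = \sum_{i=1}^d \lambda_i \nu_{\pi(i)}$, so the weight of each perfect matching coincides with the value of the objective of~(\ref{equ:doubly}) at the corresponding permutation matrix. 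Hence the minimum weight of a perfect matching on $G$ equals $\min_\pi \lambda^T P_\pi \nu$, the minimum of the objective restricted to permutation matrices.

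Next I would argue that this restricted minimum equals the unrestricted minimum $\min_P \lambda^T P \nu$ over the full Birkhoff polytope $B_d$. The map $P \mapsto \lambda^T P \nu$ is linear in the entries of $P$, and $B_d$ is a nonempty bounded polytope (it contains $I_d$ and is contained in $[0,1]^{d\times d}$), so the linear program $\min_{P \in B_d} \lambda^T P \nu$ is feasible and bounded; in particular an optimal solution exists and, by Lemma~\ref{thm:lp}, at least one optimal solution is an extreme point of $B_d$. By the Birkhoff--von Neumann theorem (Lemma~\ref{thm:birkhoff}), the extreme points of $B_d$ are precisely the permutation matrices. Therefore the optimum is attained at some $P_\pi$, which yields $\min_{P \in B_d} \lambda^T P \nu = \min_\pi \lambda^T P_\pi \nu$. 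Chaining this with the matching identity from the previous step proves the lemma.

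I do not expect a serious obstacle here, since the two enabling lemmas do the heavy lifting. The only points requiring care are verifying that the objective is genuinely linear in the entries of $P$ (so that LP theory, rather than a softer convexity argument, applies) and confirming the hypotheses of Lemma~\ref{thm:lp} for $B_d$: namely that it is nonempty with at least one extreme point and that the LP admits an optimal solution, both of which follow from compactness of $B_d$ together with continuity of the linear objective. Note that the sharper companion result, identifying \emph{which} permutation achieves the minimum (the sorted pairing $\lambda_i \leftrightarrow \nu_i$), is handled separately in Theorem~\ref{thm:combined} and is not needed for the value statement asserted in this lemma.
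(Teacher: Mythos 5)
Your proof is correct and follows essentially the same route as the paper's: reduce the LP over the Birkhoff polytope to its extreme points via Lemma~\ref{thm:lp} and the Birkhoff--von Neumann theorem (Lemma~\ref{thm:birkhoff}), then identify permutation matrices with perfect matchings of $G$ so that $\lambda^T P_\pi \nu$ equals the matching weight. If anything, you are slightly more careful than the paper in verifying the hypotheses of Lemma~\ref{thm:lp} (nonemptiness, boundedness, and existence of an optimum for the LP over $B_d$), which the paper takes for granted.
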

Surprisingly, we do not even need to run standard graph matching algorithms to solve our matching problem. Instead, Thm.~\ref{thm:matching} gives a closed form solution. 
\begin{restatable}{theorem}{matching}
Let $\lambda = (\lambda_1, \ldots, \lambda_d)$ and $\nu = (\nu_1, \ldots, \nu_d)$ with $\lambda_1\geq\cdots\geq\lambda_d$ and $\nu_1\leq\cdots\leq\nu_d$. The minimum-weight perfect matching on $G$ is $\pi^* = \{(v_{\lambda_i}, v_{\nu_i}): 1\leq i\leq d\}$ with the minimum weight $w(\pi^*) = \sum_{i=1}^d \lambda_i\nu_{i}$.
\label{thm:matching}
\end{restatable}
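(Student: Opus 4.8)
The plan is to recognize the stated minimum-weight matching as an instance of the rearrangement inequality and to establish it by the exchange (inversion-removal) argument sketched after Thm.~\ref{thm:combined}. First I would set up the bijection between perfect matchings of the complete bipartite graph $G$ and permutations of $[d]$: a perfect matching that pairs each $v_{\lambda_i}$ with $v_{\nu_{\sigma(i)}}$ corresponds to a permutation $\sigma$, and by the edge weights its total weight is $w(\sigma) = \sum_{i=1}^d \lambda_i\nu_{\sigma(i)}$. Under this correspondence $\pi^*$ is the identity permutation, so the theorem is equivalent to the claim that the identity minimizes $w(\sigma)$ over all permutations, with minimum value $\sum_{i=1}^d \lambda_i\nu_i$.

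The core is a single-swap inequality. Suppose $\sigma\neq\mathrm{id}$; then there is a position $i$ with $\sigma(i) > \sigma(i+1)$ (an \emph{adjacent inversion}). Let $\sigma'$ be obtained from $\sigma$ by exchanging the images at positions $i$ and $i+1$. A direct computation yields $w(\sigma') - w(\sigma) = (\lambda_i - \lambda_{i+1})(\nu_{\sigma(i+1)} - \nu_{\sigma(i)})$. Since the $\lambda_i$ are non-increasing we have $\lambda_i - \lambda_{i+1}\geq 0$, while $\sigma(i) > \sigma(i+1)$ together with the $\nu_j$ being non-decreasing gives $\nu_{\sigma(i+1)} - \nu_{\sigma(i)}\leq 0$; the two factors have opposite sign, so $w(\sigma')\leq w(\sigma)$. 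The key structural fact is that swapping an adjacent inversion decreases the number of inversions of the permutation by exactly one.

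I would then finish by induction on the inversion count. Starting from any $\sigma$, repeatedly removing an adjacent inversion produces a finite sequence of permutations---finite because the inversion count strictly decreases at each step and is bounded below by $0$---that terminates exactly at the identity (the unique inversion-free permutation), and along which the weight never increases. Hence $w(\mathrm{id}) \leq w(\sigma)$ for every $\sigma$, which proves that $\pi^*$ is a minimum-weight perfect matching with weight $\sum_{i=1}^d\lambda_i\nu_i$.

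The main obstacle is making the termination argument airtight: I must verify that an adjacent inversion always exists whenever $\sigma\neq\mathrm{id}$ and that each such swap strictly lowers the inversion count, so that the exchange process halts precisely at the sorted matching rather than stalling. The sign bookkeeping in the weight-difference identity is the other place to be careful; note in particular that ties among the $\lambda_i$ or among the $\nu_j$ merely make the difference zero, so the conclusion (and the optimal value) survives even though the minimizing matching need not be unique.
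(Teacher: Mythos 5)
Your proof is correct and uses essentially the same approach as the paper: the core is the identical exchange computation, in which the weight change under swapping an inverted (crossed) pair factors as a product of two oppositely-signed differences, here $(\lambda_i - \lambda_{i+1})(\nu_{\sigma(i+1)} - \nu_{\sigma(i)}) \le 0$, and repeated swaps drive the matching to the sorted one $\pi^*$. The only difference is bookkeeping --- the paper's appendix inducts on $d$, peeling off the pair $(v_{\lambda_{n+1}}, v_{\nu_{n+1}})$ after at most one re-matching of a general (not necessarily adjacent) inverse pair, whereas you induct on the inversion count using adjacent swaps; both are sound, and your strictly decreasing inversion count actually makes termination more explicit than the paper's main-text sketch, which appeals only to there being finitely many inverse pairs.
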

The permutation matrix that achieves the minimum weight is $P^* = I_d$ since $\pi^*(\lambda_i) = \nu_i$. Note that $P = K\circ K$, it follows that the optimal $K^*$ is also $I_d$. Hence we can solve for the optimal $U^*$ matrix by solving the equation $U^{*T}V = I_d$, which leads to $U^* = V$. Now plug in the optimal $K^* = I_d$ into (\ref{equ:partial}). The optimization w.r.t. $\Lambda$ decomposes into $d$ independent problems, each of which being a simple scalar optimization problem:
\begin{align}
& \underset{\lambda}{\text{minimize}} &&  \sum_{i=1}^d \lambda_i\nu_i - m\log \lambda_i \nonumber\\
& \text{subject to} && l \leq \lambda_i \leq u, \quad\forall i = 1, \ldots, d
\label{equ:newoptsigma}
\end{align}
Depending on whether the value $m/\nu_i$ is within the range $[l, u]$, the optimal solution $\lambda_i^*$ for each scalar minimization problem may take different forms. Define a hard-thresholding operator $\mathbb{T}_{[l,u]}(z)$ as follows:
\begin{equation}
\mathbb{T}_{[l, u]}(z) = 
\begin{cases}
l, & z < l \\
z, & l \leq z \leq u \\
u, & z > u
\end{cases}
\end{equation}
Using this hard-thresholding operator, we can express the optimal solution $\lambda_i^*$ as $\lambda_i^* = \mathbb{T}_{[l, u]}(m/\nu_i)$, which finishes the proof.

\subsubsection{Proof of Lemma~\ref{lemma:equal}}
\equivalence*
\begin{proof}
By Lemma~\ref{thm:lp} and Lemma~\ref{thm:birkhoff}, the optimal value is achieved when $P$ is a permutation matrix. Given a permutation matrix $P$, we can understand $P$ as a bijective mapping from the index of rows to the index of columns. Specifically, construct a permutation $\pi_P: [d]\to [d]$ from $P$ as follows. For each row index $i\in[d]$, $\pi_P(i) = j$ iff $P_{ij} = 1$. It follows that $\pi_P$ is a permutation of $[d]$ since $P$ is assumed to be a permutation matrix. The objective function in (\ref{equ:doubly}) can be written in terms of $\pi_P$ as
$$\lambda^T P \nu = \sum_{i=1}^d \lambda_i\nu_{\pi_P(i)}$$
which is exactly the weight of the perfect matching on $G(V_\lambda, V_\nu, E, w)$ given by $\pi_P$:
$$w(\pi_P) = w(\{(i, \pi_P(i)): 1\leq i\leq d\}) = \sum_{i=1}^d \lambda_i\nu_{\pi_P(i)}$$
Similarly, in the other direction, given any perfect matching $\pi: [d]\to [d]$ on the bipartite graph $G(V_\lambda, V_\nu, E, w)$, we can construct a corresponding permutation matrix $P_\pi$: $P_{\pi, ij} = 1$ iff $\pi(i) = j$, otherwise 0. Since $\pi$ is a perfect matching, the constructed $P_\pi$ is guaranteed to be a permutation matrix. 

Hence the problem of finding the optimal value of (\ref{equ:doubly}) is equivalent to finding the minimum weight perfect matching on the constructed bipartite graph $G(V_\lambda, V_\nu, E, w)$. Note that the above constructive process also shows how to recover the optimal permutation matrix $P_{\pi^*}$ from the minimum weight perfect matching $\pi^*$.
\end{proof}

\subsubsection{Proof of Theorem~\ref{thm:matching}}
Note that $\lambda = (\lambda_1, \ldots, \lambda_d)$ and $\nu = (\nu_1, \ldots, \nu_d)$ are assumed to satisfy $\lambda_1\geq\cdots\geq \lambda_d$ and $\nu_1\leq\cdots\leq\nu_d$. To make the discussion more clear, we first make the following definition of an \emph{inverse pair}.
\begin{definition}[Inverse pair]
Given a perfect match $\pi$ of $G(V_\lambda, V_\nu, E, w)$, $(\lambda_i, \lambda_j, \nu_k, \nu_l)$ is called an \emph{inverse pair} if $i\leq j$, $k\leq l$ and $(v_{\lambda_i}, v_{\nu_l})\in \pi$, $(v_{\lambda_j}, v_{\nu_k})\in\pi$.
\end{definition}
\begin{lemma}
Given a perfect match $\pi$ of $G(V_\lambda, V_\nu, E, w)$ and assuming $\pi$ contains an inverse pair $(\lambda_i, \lambda_j, \nu_k, \nu_l)$. Construct $\pi' = \pi\backslash\{(v_{\lambda_i}, v_{\nu_l}), (v_{\lambda_j}, v_{\nu_k})\}\cup\{((v_{\lambda_i}, v_{\nu_k}), (v_{\lambda_j}, v_{\nu_l})\}$. Then $w(\pi')\leq w(\pi)$.
\label{lemma:inverse}
\end{lemma}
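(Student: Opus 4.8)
The plan is to exploit the fact that $\pi$ and $\pi'$ differ only on the four vertices $v_{\lambda_i}, v_{\lambda_j}, v_{\nu_k}, v_{\nu_l}$: the construction deletes the two edges $(v_{\lambda_i}, v_{\nu_l})$ and $(v_{\lambda_j}, v_{\nu_k})$ and replaces them by $(v_{\lambda_i}, v_{\nu_k})$ and $(v_{\lambda_j}, v_{\nu_l})$, leaving every other edge of $\pi$ untouched. First I would verify that $\pi'$ is still a perfect matching: each of the four affected vertices remains incident to exactly one edge in $\pi'$, and no vertex outside these four is touched, so $\pi'$ continues to cover $V_\lambda \cup V_\nu$ with each vertex used exactly once.

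With validity in hand, the weights of $\pi$ and $\pi'$ coincide on all the common edges, so these cancel when we form the difference $w(\pi) - w(\pi')$, and only the contributions of the swapped edges survive. Using the edge weights $w(e(v_{\lambda_a}, v_{\nu_b})) = \lambda_a \nu_b$, I would write
\[
w(\pi) - w(\pi') = \bigl(\lambda_i \nu_l + \lambda_j \nu_k\bigr) - \bigl(\lambda_i \nu_k + \lambda_j \nu_l\bigr),
\]
and the key algebraic step is to factor the right-hand side as $(\lambda_i - \lambda_j)(\nu_l - \nu_k)$.

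The final step is to read off the signs of the two factors from the orderings assumed in the lemma. Since $i \leq j$ and $\lambda_1 \geq \cdots \geq \lambda_d$, we have $\lambda_i - \lambda_j \geq 0$; since $k \leq l$ and $\nu_1 \leq \cdots \leq \nu_d$, we have $\nu_l - \nu_k \geq 0$. Their product is therefore nonnegative, which gives $w(\pi) - w(\pi') \geq 0$, i.e. $w(\pi') \leq w(\pi)$, as claimed. I do not expect any genuine obstacle here: the statement is a localized instance of the rearrangement inequality, and the only point requiring care is to match the two opposite monotonicity conventions (decreasing $\lambda$, increasing $\nu$) correctly against the inequalities $i \leq j$ and $k \leq l$ in the definition of an inverse pair, so that both factors come out with the same sign.
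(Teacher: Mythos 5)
Your proposal is correct and follows essentially the same argument as the paper: both compute the difference of the two matchings' weights on the four affected vertices and factor it as $(\lambda_i - \lambda_j)(\nu_l - \nu_k) \geq 0$ using the opposite orderings of $\lambda$ and $\nu$. Your explicit check that $\pi'$ remains a perfect matching is a small addition the paper leaves implicit, but it does not change the route.
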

\begin{proof}
Let us compare the weights of $\pi$ and $\pi'$. Note that since $i\leq j, k\leq l$, we have $\lambda_i\geq \lambda_j$ and $\nu_k\leq \nu_l$.
\begin{align*}
w(\pi') - w(\pi) &= (\lambda_i\nu_k + \lambda_j\nu_l) - (\lambda_i\nu_l + \lambda_j\nu_k) \\
& = (\lambda_i - \lambda_j)(\nu_k - \nu_l)\\
& \leq 0
\end{align*}
\begin{figure}[htb]
\centering
  \includegraphics[width=0.5\textwidth]{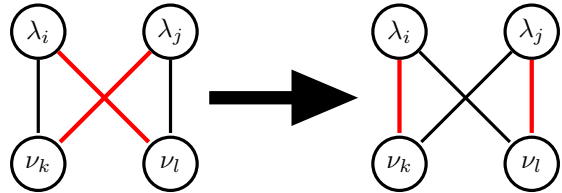}
\caption{Re-matching an inverse pair $(\lambda_i, \lambda_j, \nu_k, \nu_l) = \{(v_{\lambda_i}, v_{\nu_l}), (v_{\lambda_j}, v_{\nu_k})\}$ on the left side to a match with smaller weight $\{(v_{\lambda_i}, v_{\nu_k}), (v_{\lambda_j}, v_{\nu_l})\}$. Red color is used to highlight edges in the perfect matching. }
\label{fig:inverse}
\end{figure}
Intuitively, this lemma says that we can always decrease the weight of a perfect matching by re-matching an inverse pair. Fig.~\ref{fig:inverse} illustrates this process. It is worth emphasizing here that the above re-matching process only involves four nodes, i.e., $v_{\lambda_i}, v_{\nu_l}, v_{\lambda_j}$ and $v_{\nu_k}$. In other words, the other parts of the matching stay unaffected. 
\end{proof}
Using Lemma~\ref{lemma:inverse}, we are now ready to prove Thm.~\ref{thm:matching}:
\matching*
\begin{proof}
We will prove by induction. 
\begin{itemize}
  \item   \textbf{Base case}. The base case is $d = 2$. In this case there are only two valid perfect matchings, i.e., $\{(v_{\lambda_1}, v_{\nu_1}), (v_{\lambda_2}, v_{\nu_2})\}$ or $\{(v_{\lambda_1}, v_{\nu_2}), (v_{\lambda_2}, v_{\nu_1})\}$. Note that the second perfect matching $\{(v_{\lambda_1}, v_{\nu_2}), (v_{\lambda_2}, v_{\nu_1})\}$ is an inverse pair. Hence by Lemma~\ref{lemma:inverse}, $w(\{(v_{\lambda_1}, v_{\nu_1}), (v_{\lambda_2}, v_{\nu_2})\}) = \lambda_1\nu_1 + \lambda_2\nu_2 \leq \lambda_1\nu_2 + \lambda_2\nu_1 = w(\{(v_{\lambda_1}, v_{\nu_2}), (v_{\lambda_2}, v_{\nu_1})\})$. 
  \item   \textbf{Induction step}. Assume Thm.~\ref{thm:matching} holds for $d = n$. Consider the case when $d = n+1$. Start from any perfect matching $\pi$. Check the matches of node $v_{\lambda_{n+1}}$ and $v_{\nu_{n+1}}$. Here we have two subcases to discuss:
    \begin{itemize}
      \item     If $v_{\lambda_{n+1}}$ is matched to $v_{\nu_{n+1}}$ in $\pi$. Then we can remove nodes $v_{\lambda_{n+1}}$ and $v_{\nu_{n+1}}$ from current graph, and this reduces to the case when $n = d$. By induction assumption, the minimum weight perfect matching on the new graph is given by $\sum_{i=1}^n \lambda_i\nu_i$, so the minimum weight on the original graph is $\sum_{i=1}^n \lambda_i\nu_i + \lambda_{n+1}\nu_{n+1} = \sum_{i=1}^{n+1}\lambda_i\nu_i$. 
      \item     If $v_{\lambda_{n+1}}$ is not matched to $v_{\nu_{n+1}}$ in $\pi$. Let $v_{\nu_j}$ be the match of $v_{\lambda_{n+1}}$ and $v_{\lambda_i}$ be the match of $v_{\nu_{n+1}}$, where $i\neq n+1$ and $j\neq n+1$. In this case we have $i < n+1$ and $j < n+1$, so $(\lambda_i, \lambda_{n+1}, \nu_j, \nu_{n+1})$ forms an inverse pair by definition. By Lemma~\ref{lemma:inverse}, we can first re-match $v_{\lambda_{n+1}}$ to $v_{\nu_{n+1}}$ and $v_{\lambda_i}$ to $v_{\nu_j}$ to construct a new match $\pi'$ with $w(\pi')\leq w(\pi)$. In the new matching $\pi'$ we have the property that $v_{\lambda_{n+1}}$ is matched to $v_{\nu_{n+1}}$, and this becomes the above case that we have already analyzed, so we still have the minimum weight perfect matching to be $\sum_{i=1}^{n+1}\lambda_i\nu_i$.
    \end{itemize}
\end{itemize}
Intuitively, as shown in Fig.~\ref{fig:inverse}, an inverse pair corresponds to a cross in the matching graph. The above inductive proof basically works from right to left to recursively remove inverse pairs (crosses) from the matching graph. Each re-matching step in the proof will decrease the number of inverse pairs at least by one. The whole process stops until there is no inverse pair in the current perfect matching. Since the total number of possible inverse pairs, the above process can stop in finite steps. We illustrate the process of removing inverse pairs in Fig.~\ref{fig:matching}.
\begin{figure}[htb]
\centering
  \includegraphics[width=1.0\textwidth]{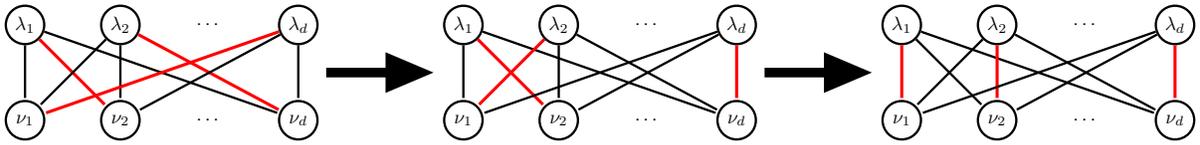}
\caption{The inductive proof works by recursively removing inverse pairs from the right side of the graph to the left side of the graph. The process stops until there is no inverse pair in the matching. Red color is used to highlight edges in the perfect matching.}
\label{fig:matching}
\end{figure}
\end{proof}

\end{document}